\title{Imitation-Projected Programmatic Reinforcement Learning}
\author{%
  Abhinav Verma\thanks{Equal contribution}\\
  Rice University\\
  \texttt{averma@rice.edu} \\
  % examples of more authors
 \And
  Hoang M. Le$^*$ \\
  Caltech \\
  \texttt{hmle@caltech.edu} \\
  \And
  Yisong Yue \\
  Caltech \\
  \texttt{yyue@caltech.edu} \\
  \And
  Swarat Chaudhuri \\
  Rice University \\
  \texttt{swarat@rice.edu} \\
}
\begin{document}

\maketitle

\begin{abstract}
We study the problem of programmatic reinforcement learning, in which policies are represented as short programs in a symbolic language.
%We present Imitation-Projected Policy Gradient (\ippg), an algorithmic framework for learning policies as parsimonious programs in a symbolic language. 
Programmatic policies can be more interpretable, generalizable, and amenable to formal verification than neural policies; however, designing rigorous learning approaches for such policies remains a challenge. Our approach to this challenge --- a meta-algorithm called \ippg --- is based on three insights. First, we view our learning task as optimization in policy space, modulo the constraint that the desired policy has a programmatic representation, and solve this optimization problem using a form of mirror descent %``lift-and-project'' 
that takes a gradient step into the unconstrained policy space and then projects back onto the constrained space.  Second, we view the unconstrained policy space as mixing neural and programmatic representations, which enables employing state-of-the-art deep policy gradient approaches.  Third, we cast the projection step as program synthesis via imitation learning, and exploit contemporary combinatorial methods for this task. We present theoretical convergence results for \ippg and empirically evaluate the approach in three continuous control domains. The experiments show that \ippg can significantly outperform state-of-the-art approaches for learning programmatic policies.
\end{abstract}

\section{Introduction}
\label{sec:intro}

A growing body of work \cite{pirl,bastani2018verifiable,zhu2019an} investigates reinforcement learning (RL) approaches that represent policies as programs in a symbolic language, e.g., a domain-specific language for composing control modules such as PID controllers \cite{ang2005pid}. Short programmatic policies offer many advantages over neural policies discovered through deep RL, including greater interpretability, better generalization to unseen environments, and greater amenability to formal verification. These benefits motivate developing effective approaches for learning such programmatic policies.

However, programmatic reinforcement learning (\pirl) remains a challenging problem, owing to the highly structured nature of the policy space.  Recent state-of-the-art approaches employ program synthesis methods to imitate or distill a pre-trained neural policy into short programs \cite{pirl,bastani2018verifiable}. However, such a distillation process can yield a highly suboptimal programmatic policy --- i.e., a large distillation gap --- and the issue of direct policy search for programmatic policies also remains open.

In this paper, we develop \ippg (Imitation-{\bf Pro}jected {\bf P}rogrammatic R{\bf e}inforcement {\bf L}earning), a new learning meta-algorithm for \pirl, as a response to this challenge.
%called {\em Imitation-Projected Policy Gradient} (\ippg).  
The design of \ippg is based on three insights that enables integrating and building upon state-of-the-art approaches for policy gradients and program synthesis. First, we view programmatic policy learning as a constrained policy optimization problem, in which the desired policies are constrained to be those that have a programmatic representation. This insight motivates utilizing constrained mirror descent approaches, which take a gradient step into the unconstrained policy space and then project back onto the constrained space.
Second, by allowing the unconstrained policy space to have a mix of neural and programmatic representations, we can employ well-developed deep policy gradient approaches \cite{sutton2000policy,ddpg,schulman2015trust,schulman2017proximal,corerl} to compute the unconstrained gradient step. 
%A remaining challenge in constrained mirror descent methods is defining a suitable projection operator, which in our case must recover a programmatic policy from the unconstrained policy space. 
Third, we define the projection operator using program synthesis via imitation learning \cite{pirl,bastani2018verifiable}, in order to recover a programmatic policy from the unconstrained policy space.
%
%Our framework is appealing from both theoretical and practical perspectives. 
Our contributions can be summarized as:
\begin{itemize}[leftmargin=*]
    \item We present \ippg, a novel meta-algorithm that is based on mirror descent, program synthesis, and imitation learning, for \pirl.
    \item On the theoretical side, we show how to cast \ippg as a form of constrained mirror descent. We provide a thorough theoretical analysis characterizing the impact of approximate gradients and projections. Further, we prove results that provide expected regret bounds and finite-sample guarantees under reasonable assumptions. 
    \item On the practical side, we provide a concrete instantiation of \ippg and evaluate it in three continuous control domains, including the challenging car-racing domain \torcs \cite{TORCS}. The experiments show significant improvements over state-of-the-art approaches for learning programmatic policies.
\end{itemize}

\section{Problem Statement}
\label{sec:problem}

The problem of programmatic reinforcement learning (\pirl) consists of a Markov Decision Process (\mdp) $\mathcal{M}$ and a programmatic policy class $\Pi$. The definition of $\mathcal{M} = (\Sc,\Ac,P,c,p_0,\gamma)$ is standard \cite{sutton2018reinforcement}, with $\Sc$ being the state space, $\Ac$ the action space, $P(s'|s,a)$ the probability density function of transitioning from a state-action pair to a new state, $c(s,a)$ the state-action cost function, $p_0(s)$ a distribution over starting states, and $\gamma \in (0,1)$ the discount factor.  A policy $\pi: \Sc \rightarrow \Ac$ (stochastically) maps states to actions. We focus on continuous control problems, so $\Sc$ and $\Ac$ are assumed to be continuous spaces.  The goal is to find a programmatic policy $\pi^*\in\Pi$ such that:
\begin{eqnarray}
\pi^* = \argmin_{\pi\in\Pi} J(\pi), \qquad\textrm{where:  } J(\pi) = \Expec\left[\sum_{i = 0}^\infty \gamma^i c(s_i,a_i\equiv\pi(s_i))\right],\label{eqn:obj}
\end{eqnarray}
with the expectation taken over the initial state distribution $s_0\sim p_0$, the policy decisions, and the transition dynamics $P$. One can also use rewards, in which case \eqref{eqn:obj} becomes a maximization problem.

\textbf{Programmatic Policy Class.}
A programmatic policy class $\Pi$ consists of policies that can be represented parsimoniously by a (domain-specific) programming language. Recent work \cite{pirl,bastani2018verifiable,zhu2019an} indicates that such policies can be easier to interpret and formally verify than neural policies, and can also be more robust to changes in the environment. 

In this paper, we consider two concrete classes of programmatic policies. The first, a simplification of the class considered in Verma et al.~\cite{pirl}, is defined by the modular, high-level language in \figref{syntax}. This language assumes a library of parameterized functions $\lop_\theta$ representing standard controllers, for instance Proportional-Integral-Derivative (PID) \cite{aastrom1984automatic} or bang-bang controllers \cite{bellman1956bang}. Programs in the language take states $s$ as inputs and produce actions $a$ as output, and can invoke fully instantiated library controllers along with predefined arithmetic, boolean and relational operators. 
The second, ``lower-level" class, from Bastani et al.~\cite{bastani2018verifiable}, consists of decision trees that map states to actions.

\begin{figure}
{\small
\begin{eqnarray*}
\pi(s) & ::= & a \mid \op(\pi_1(s),\dots, \pi_k(s)) \mid \ifc~b~\thenc~\pi_1(s)~\elsec~\pi_2(s) \mid \lop_\theta(\pi_1(s),\dots, \pi_k(s)) \\
b & ::= & \phi(s) \mid \bop(b_1,\dots, b_k)
\end{eqnarray*}
}
\vspace{-0.2in}
\caption{\textit{A high-level syntax for programmatic policies, inspired by \cite{pirl}. A policy $\pi(s)$ takes a state $s$ as input and produces an action $a$ as output. $b$ represents boolean expressions; $\phi$ is a boolean-valued operator on states; $\op$ is an operator that combines multiple policies into one policy; $\bop$ is a standard boolean operator; and $\oplus_\theta$ is a ``library function" parameterized by $\theta$.}}\figlabel{language}
\label{fig:syntax}
%\vspace{-0.1in}
\end{figure}

\begin{figure}[t]
	\vskip 0.1in
{\small
	$$
	\begin{array}{l}
	\ifc~(s[\tangle] < 0.011 ~\andc~ s[\tangle] > -0.011) \smallskip \\
	\qquad \qquad ~\thenc~ \PID{\rpm,0.45,3.54,0.03,53.39}(s) ~\elsec~\PID{\rpm,0.39,3.54,0.03,53.39}(s)
	\end{array}
	$$
	}
	\vspace{-0.05in}
	\caption{\textit{A programmatic policy for acceleration in \torcs \cite{TORCS}, automatically discovered by \ippg. $s[\tangle]$ represents the most recent reading from sensor $\tangle$.
	}}
	\label{fig:code}
	%\vspace{-0.1in}
	%\vspace{0.3in}
\end{figure}

\textbf{Example.} Consider the problem of learning a programmatic policy, in the language of \figref{syntax},
that controls a car's accelerator in the \torcs car-racing environment~\cite{TORCS}. \figref{code} shows a program in our language for this task. The program invokes PID controllers $\PID{j, j_{\mathit{target}}, \theta_P, \theta_I, \theta_D}$, where $j$ identifies the sensor (out of 29, in our experiments) that provides inputs to the controller, $j_{\mathit{target}}$ is a learned parameter that is the desired target for the value of sensor $j$, and $\theta_P$, $\theta_I$, and $\theta_D$ are respectively the real-valued coefficients of the proportional, integral, and derivative terms in the controller.  
We note that the program only uses the sensors $\tangle$ and $\rpm$. While $\tangle$ (for the position of the car relative to the track axis) is used to decide which controller to use, only the $\rpm$ sensor is needed to calculate the acceleration.

\textbf{Learning Challenges.}
Learning programmatic policies in the continuous RL setting is challenging, as the best performing methods utilize policy gradient approaches \cite{sutton2000policy,ddpg,schulman2015trust,schulman2017proximal,corerl}, but policy gradients are hard to compute in programmatic representations.  In many cases, $\Pi$ may not even be differentiable.  For our approach, we only assume access to program synthesis methods that can select a programmatic policy $\pi\in\Pi$ that minimizes imitation disagreement with demonstrations provided by a teaching oracle. 
Because imitation learning tends to be easier than general RL in long-horizon tasks \cite{sun2017deeply}, the task of imitating a neural policy with a program is, intuitively, significantly simpler than the full programmatic RL problem. 
This intuition is corroborated by past work on programmatic RL \cite{pirl}, 
which shows that direct search over programs often fails to meet basic performance objectives.
%Prior work has given efficient implementations of such program synthesizers for a number of domain-specific languages \cite{pirl,bastani2018verifiable,zhu2019an}.

\begin{algorithm}[t]
	\caption{Imitation-Projected Programmatic Reinforcement Learning (\ippg)}
	\label{alg:ippg}
	\begin{small}
	\begin{algorithmic}[1]
		\STATE  {\bfseries Input:} Programmatic \& Neural Policy Classes: $\Pi$ \& $\F$. 
		%\qquad {\bf Input:} Either initial $\pi_0$ or initial $f_0$
		\STATE  {\bfseries Input:} Either initial $\pi_0$ or initial $f_0$
		\STATE Define joint policy class: $\HH \equiv \Pi \oplus \F$ \ \ \ \ \ \ \ \  \comment{$h \equiv \pi + f$ defined as $h(s) = \pi(s) + f(s)$}
		\IF{given initial $f_0$}
		    \STATE $\pi_0 \leftarrow \project(f_0)$ \ \ \ \ \ \ \ \ \ \comment{program synthesis via imitation learning}
        \ENDIF
		\FOR{$t = 1, \ldots, T$}
		    %\STATE $h'_t \leftarrow h_{t-1} - \eta_t\nabla_{\F} J(h_{t-1})$ \ \ \ \ \ \ \ \ \ \comment{policy gradient in neural policy space with learning rate $\eta_t$}
		    \STATE $h_t \leftarrow \update_\F(\pi_{t-1}, \eta)$ \ \ \ \ \ \ \ \ \ \comment{policy gradient in neural policy space with learning rate $\eta$}
		    \STATE $\pi_t \leftarrow \project_\Pi(h_t)$ \ \ \ \ \ \ \ \ \ \comment{program synthesis via imitation learning}
		\ENDFOR
		\STATE \textbf{Return:} Policy $\pi_T$
	\end{algorithmic}
	\end{small}

\end{algorithm}

\section{Learning Algorithm}
\label{sec:algorithm}
To develop our approach, we take the viewpoint of \eqref{eqn:obj} being a constrained optimization problem, where $\Pi\subset\HH$ resides within a larger space of policies $\HH$.  In particular, we will represent $\HH \equiv \Pi \oplus \F$ using a mixing of programmatic policies $\Pi$ and neural polices $\F$.  Any mixed policy $h\equiv\pi+ f$ can be invoked as $h(s) = \pi(s) + f(s)$.  In general, we assume that $\F$ is a good approximation of $\Pi$ (i.e., for each $\pi\in \Pi$ there is some $f\in\F$ that approximates it well), which we formalize in Section \ref{sec:convergence}.

We can now frame our constrained learning problem as minimizing \eqref{eqn:obj} over $\Pi\subset\HH$,
that alternate between  taking a gradient step in the general space $\HH$ and projecting back down onto $\Pi$. This ``lift-and-project'' perspective motivates viewing our problem via the lens of mirror descent \cite{nemirovsky1983problem}.
In standard mirror descent,  the unconstrained gradient step can be written as $h \leftarrow h_{prev} - \eta \nabla_\HH J(h_{prev})$ for step size $\eta$, and the projection can be written as $\pi \leftarrow \argmin_{\pi'\in \Pi} D(\pi',h)$ for divergence measure $D$.

Our approach, \emph{Imitation-Projected Programmatic Reinforcement Learning} (\ippg), is outlined in Algorithm \ref{alg:ippg} (also see  \figref{ippg-diagram}).
\ippg is a meta-algorithm that requires instantiating two subroutines, $\update$ and $\project$, which correspond to the standard update and projection steps, respectively. \ippg can be viewed as a form of functional mirror descent with some notable deviations from vanilla mirror descent.

\begin{wrapfigure}{r}{2.1in}
\vspace{-0.15in}
\includegraphics[width=\linewidth]{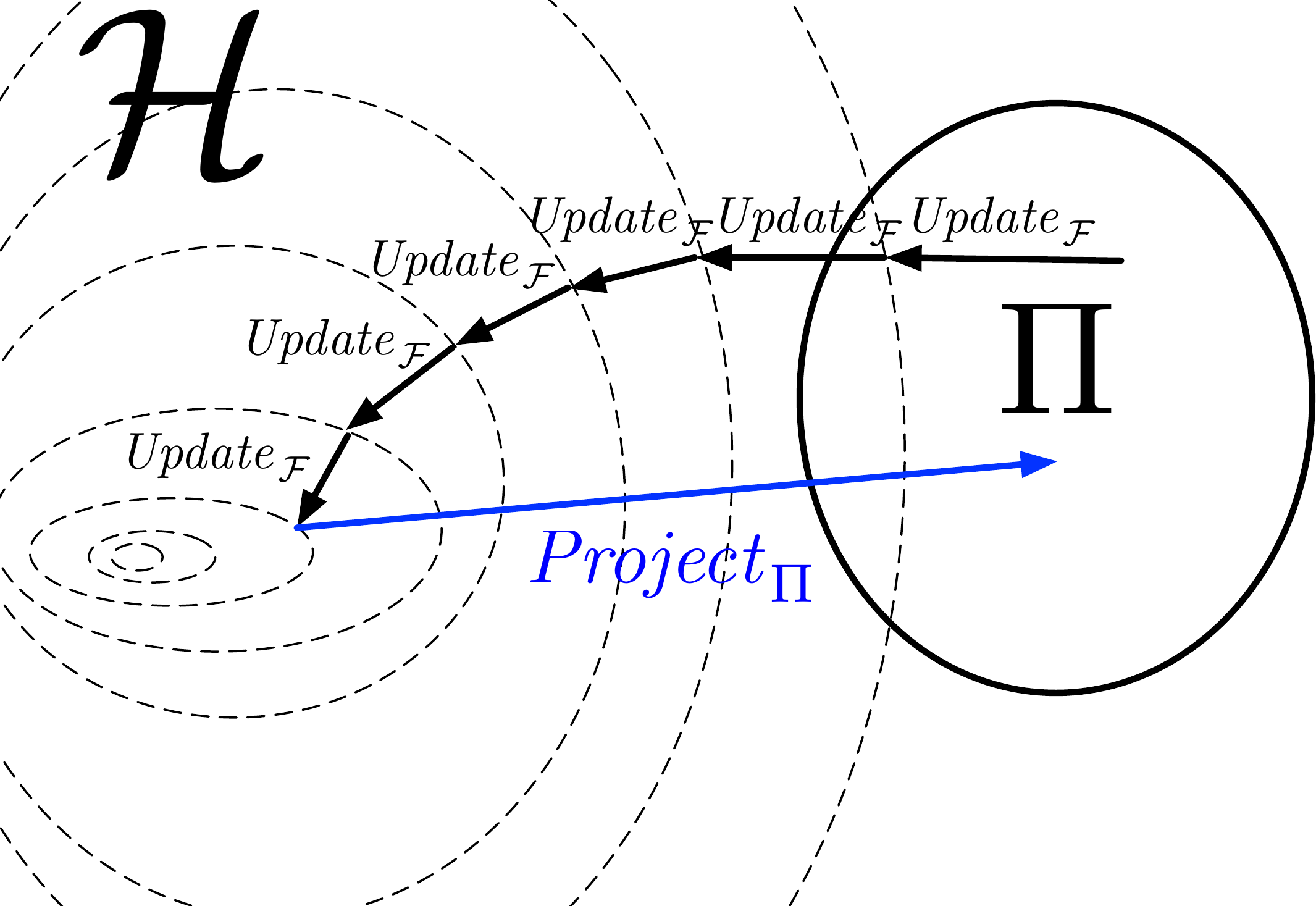}
\caption{Depicting the \ippg meta-algorithm.}\label{fig:ippg-diagram}
\vspace{-0.3in}
\end{wrapfigure}
\textbf{$\update_\F$.}
Since policy gradient methods are well-developed for neural policy classes $\F$ (e.g., \cite{ddpg,schulman2015trust,schulman2017proximal,henderson2018deep,duan2016benchmarking,corerl}) and non-existent for programmatic policy classes $\Pi$, \ippg~is designed to leverage policy gradients in $\F$ and avoid policy gradients in $\Pi$. Algorithm \ref{alg:update} shows one instantiation of $\update_\F$. Note that standard mirror descent takes unconstrained gradient steps in $\HH$ rather than $\F$, and we discuss this discrepancy between $\update_\HH$ and $\update_\F$ in Section \ref{sec:convergence}.

\textbf{$\project_\Pi$.} Projecting onto $\Pi$ can be implemented using program synthesis via imitation learning, i.e., by synthesizing a $\pi\in\Pi$ to best imitate demonstrations provided by a teaching oracle $h\in\HH$.  
%This operation is a special case of program synthesis, and 
Recent work  \cite{pirl,bastani2018verifiable,zhu2019an} has given 
practical heuristics for this task for various programmatic policy classes. 
%\hleedit{Intuitively, 
%this projection step, which is the task of imitating a neural policy, 
%this task is significantly simpler than the full RL problem of searching over the programmatic space. It is known that imitation learning (IL) tends to be easier than general RL under possibly long horizon \cite{sun2017deeply}. 
%This intuition is corroborated by related work on programmatic RL \cite{pirl}, which shows that direct search over programs often fails to meet basic performance objectives.
%} 
%
Algorithm \ref{alg:project} shows one instantiation of $\project_\Pi$ (based on DAgger \cite{dagger}).
One complication that arises is that finite-sample runs of such imitation learning approaches only return approximate solutions and so the projection is not exact. We characterize the impact of approximate projections in Section \ref{sec:convergence}.

\textbf{Practical Considerations.} 
In practice, we often employ multiple gradient steps before taking a projection step (as also described in Algorithm \ref{alg:update}), because the step size of individual (stochastic) gradient updates can be quite small.
Another issue that arises in virtually all policy gradient approaches is that the gradient estimates can have very high variance \cite{sutton2000policy,konda2000actor,henderson2018deep}. We utilize low-variance policy gradient updates by using the reference $\pi$ as a proximal regularizer in function space \cite{corerl}.

For the projection step (Algorithm \ref{alg:project}), in practice we often retain all previous roll-outs $\tau$ from all previous projection steps. It is straightforward to query the current oracle $h$ to provide demonstrations on the states $s\in\tau$ from previous roll-outs, which can lead to substantial savings in sample complexity with regards to executing roll-outs on the environment, while not harming convergence.

\begin{algorithm}[t]
	\caption{$\update_\F$: neural policy gradient for mixed policies}
	\label{alg:update}
	\begin{small}
	\begin{algorithmic}[1]
	\STATE  {\bfseries Input:} Neural Policy Class $\F$. \qquad {\bf Input:} Reference programmatic policy: $\pi$
	\STATE  {\bfseries Input:} Step size: $\eta$. \qquad {\bf Input:} Regularization parameter: $\lambda$
	\STATE Initialize neural policy: $f_0$ \ \ \ \ \ \ \ \ \comment{any standard randomized initialization}
	\FOR{$ j = 1,\ldots,m$}
	    \STATE $f_j \leftarrow  f_{j-1} - \eta\lambda\nabla_{\F} J(\pi + \lambda f_{j-1})$ \ \ \ \ \ \ \ \ \comment{using DDPG \cite{ddpg}, TRPO \cite{schulman2015trust}, etc.,  holding $\pi$ fixed}
    \ENDFOR
    %\STATE \textbf{Return:} $h \equiv (\pi, f_m)$
    \STATE \textbf{Return:} $h \equiv \pi + \lambda f_m$
	\end{algorithmic}
	\end{small}
\end{algorithm}

\begin{algorithm}[t]
	\caption{$\project_\Pi$: program synthesis via imitation learning}
	\label{alg:project}
	\begin{small}
	\begin{algorithmic}[1]
    \STATE  {\bfseries Input:} Programmatic Policy Class: $\Pi$. \qquad {\bf Input:} Oracle policy:  $h$
    %\STATE {\bfseries Input:} Oracle policy:  $h$
    \STATE Roll-out $h$ on environment, get trajectory: $\tau_0 = (s^0,h(s^0),s^1, h(s^1),\ldots)$
    \STATE Create supervised demonstration set: $\Gamma_0 = \{(s,h(s))\}$ from $\tau_0$
    \STATE Derive $\pi_0$ from $\Gamma_0$ via program synthesis \ \ \ \ \ \ \ \ \ \comment{e.g., using methods in \cite{pirl,bastani2018verifiable}}
    \FOR{$k=1, \ldots, M$}
        \STATE Roll-out $\pi_{k-1}$, creating trajectory: $\tau_k$
        \STATE Collect demonstration data: $\Gamma' = \{(s,h(s))|s\in \tau_k\}$
        \STATE $\Gamma_k \leftarrow \Gamma' \cup \Gamma_{k-1}$  \ \ \ \ \ \ \ \ \ \comment{DAgger-style imitation learning \cite{dagger}}
        \STATE Derive $\pi_k$ from $\Gamma_k$ via program synthesis  \ \ \ \ \ \ \ \ \ \comment{e.g., using methods in \cite{pirl,bastani2018verifiable}}
    \ENDFOR
    \STATE \textbf{Return:} $\pi_M$
	\end{algorithmic}
	\end{small}
\end{algorithm}

%\vspace{-0.1in}

\section{Theoretical Analysis}
\label{sec:convergence}
We start by viewing \ippg through the lens of online learning in function space, independent of the specific parametric representation. This start point yields a convergence analysis of Alg. \ref{alg:ippg} in Section \ref{sec:expected_regret} under generic approximation errors. We then analyze the issues of policy class representation in Sections \ref{sec:finite_sample} and \ref{sec:closing_gap}, and connect Algorithms \ref{alg:update} and \ref{alg:project} with the overall performance, under some simplifying conditions. 
In particular, Section \ref{sec:closing_gap} characterizes the update error in a possibly non-differentiable setting; to our knowledge, this is the first such analysis of its kind for reinforcement learning.
%\hleedit{In section \ref{sec:expected_regret} and \ref{sec:finite_sample}, we extend relatively standard proof techniques in mirror descent and RL literature to consider approximation errors caused by the projection and update steps. We further characterize the update error in a possibly non-differentiable setting, which has not previously been considered, in section \ref{sec:closing_gap}.}

\textbf{Preliminaries.}
We consider $\Pi$ and $\F$ to be subspaces of an ambient policy space $\mathcal{U}$, which is a vector space equipped with inner product $\inner{\cdot,\cdot}$, induced norm $\norm{u} = \sqrt{\inner{u,u}}$,  dual norm  $\norm{v}_* = \sup\{ \inner{v,u}| \norm{u}\leq 1\}$, and standard scaling \& addition:  $(a u + b v)(s) = au(s) + bv(s)$ for $a,b \in \mathbb{R}$ and $u,v \in \mathcal{U}$. The cost functional of a policy $u$ is $J(u) = \int_\mathcal{S} c(s,u(s)) d\mu^u(s)$, where $\mu^u$ is the distribution of states induced by  $u$.
The joint policy class is $\HH = \Pi\oplus\F$, by $\HH = \{\pi + f | \forall  \pi\in\Pi, f\in\F\}$.\footnote{The operator $\oplus$ is not a direct sum, since $\Pi$ and $\F$ are not orthogonal.
%Scaling between $\pi$ and $f$ are naturally included, as $\Pi$ and $\F$ are subspaces.
} Note that $\HH$ is a subspace of $\mathcal{U}$, and inherits its vector space properties.
%between policies from different classes $\Pi,\F$ and $\HH$.
Without affecting the analysis, we simply equate $\mathcal{U}\equiv\HH$ for the remainder of the paper.

We assume that $J$ is convex in $\HH$, which implies that subgradient $\partial J(h)$ exists (with respect to $\HH$) \cite{bauschke2011convex}. Where $J$ is differentiable, we utilize the notion of a Fr\'echet gradient. Recall that a bounded linear operator $\nabla:\mathcal{H}\mapsto\mathcal{H}$ is called a Fr\'echet functional gradient of $J$ at $h\in\HH$ if $\lim\limits_{\norm{g}\rightarrow 0} \frac{J(h+g)-J(h) - \inner{\nabla J(h),g}}{\norm{g}} = 0$. By default, $\nabla$ (or $\nabla_\HH$ for emphasis) denotes the gradient with respect to $\HH$, whereas $\nabla_\F$ defines the gradient in the restricted subspace $\F$. 

\subsection{\ippg as (Approximate) Functional Mirror Descent}
\label{sec:expected_regret}
For our analysis, \ippg can be viewed as approximating mirror descent in (infinite-dimensional) function space over a convex set $\Pi\subset \HH$.\footnote{$\Pi$ can be convexified by considering \emph{randomized} policies, as stochastic combinations of $\pi\in\Pi$ (cf. \cite{le2019batch}).} Similar to the finite-dimensional setting \cite{nemirovsky1983problem}, we choose a strongly convex and smooth functional regularizer $R$ to be the mirror map. From the approximate mirror descent perspective, for each iteration $t$:
    \begin{enumerate}[topsep=0pt]
    \itemsep0em 
        \item Obtain a noisy gradient estimate: $\widehat{\nabla}_{t-1} \approx \nabla J(\pi_{t-1})$ %Then \update in the $\HH$ space: $\nabla R(h_{t}) = \nabla R(\pi_{t-1}) - \eta \widehat{\nabla}_{t-1}$
        \item $\update_\HH(\pi)$ in $\HH$ space: $\nabla R(h_{t}) = \nabla R(\pi_{t-1}) - \eta \widehat{\nabla}_{t-1}$ \hfill \emph{(Note $\update_\HH \neq \update_\F$)}
        \item Obtain approximate projection: $\pi_t = \project_\Pi^R(h_t) \approx \argmin_{\pi\in\Pi} D_R(\pi,h_t)$
    \end{enumerate}
$D_R(u,v) = R(u) - R(v) - \inner{\nabla R(u), u-v}$ is a Bregman divergence. Taking $R(h) = \frac{1}{2}\norm{h}^2$ will recover projected functional gradient descent in $L2$-space. Here \update becomes $h_t =\pi_{t-1}-\eta\widehat{\nabla} J(\pi_{t-1})$, and \project solves for $\argmin_{\pi\in\Pi} \norm{\pi-h_t}^2$. While we mainly focus on this choice of $R$ in our experiments, note that other selections of $R$ lead to different \update and \project operators (e.g., minimizing KL divergence if $R$ is negative entropy).

The functional mirror descent scheme above may encounter two additional sources of error compared to standard mirror descent \cite{nemirovsky1983problem}. First, in the stochastic setting (also called bandit feedback \cite{flaxman2005online}), the gradient estimate $\widehat{\nabla}_t $ may be biased, in addition to having high variance. One potential source of bias is the gap between $\update_\HH$ and $\update_\F$. Second, the \project step may be inexact. We start by analyzing the behavior of \ippg under generic bias, variance, and projection errors, before discussing the implications of approximating $\update_\HH$ and $\project_\Pi$ by Algs. \ref{alg:update} \& \ref{alg:project}, respectively. Let the bias be bounded by $\beta$, i.e., $\norm{\mathbb{E}[\widehat{\nabla}_t|\pi_t] - \nabla J(\pi_t)}_*\leq \beta$ almost surely. Similarly let the variance of the gradient estimate be bounded by $\sigma^2$, and the projection error norm $\norm{\pi_t - \pi_t^*}\leq\epsilon$. We state the expected regret bound below; more details and a proof appear in Appendix \ref{sec:app_regret_bound}.
\begin{thm}[Expected regret bound under gradient estimation and projection errors]
\label{thm:expected_regret_bound}
Let $\pi_1,\ldots, \pi_T$ be a sequence of programmatic policies returned by Algorithm \ref{alg:ippg}, and $\pi^*$ be the optimal programmatic policy. Choosing learning rate $\eta = \sqrt{\frac{1}{\sigma^2}(\frac{1}{T}+\epsilon})$, we have the expected regret over $T$ iterations: 
\begin{equation}
    \label{eqn:expected_regret_bound}
\mathbb{E}\left[ \frac{1}{T}\sum_{t=1}^T J(\pi_t)\right] - J(\pi^*) = O\left( \sigma\sqrt{\frac{1}{T}+\epsilon} + \beta\right).
\end{equation}
\end{thm}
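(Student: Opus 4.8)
The plan is to treat \ippg as stochastic mirror descent in the Hilbert space $\HH$ over the convex set $\Pi$, and to run the classical mirror-descent regret argument while carefully tracking the three extra error sources (bias $\beta$, variance $\sigma^2$, projection error $\epsilon$). I would fix the optimum $\pi^*$ as the comparator and, using convexity of $J$ on $\HH$, start from the per-step bound $J(\pi_{t-1}) - J(\pi^*) \le \inner{\nabla J(\pi_{t-1}), \pi_{t-1}-\pi^*}$ (replacing $\nabla J$ by a subgradient where $J$ is nondifferentiable). I would then split the right-hand side as $\inner{\widehat{\nabla}_{t-1}, \pi_{t-1}-\pi^*} + \inner{\nabla J(\pi_{t-1}) - \widehat{\nabla}_{t-1}, \pi_{t-1}-\pi^*}$, isolating an ``idealized'' mirror-descent term driven by the estimated gradient and a gradient-error term that will later carry the bias and variance.

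For the idealized term I would use the update identity $\nabla R(\pi_{t-1}) - \nabla R(h_t) = \eta\,\widehat{\nabla}_{t-1}$ together with the standard three-point identity for $D_R$, rewriting $\eta\inner{\widehat{\nabla}_{t-1}, \pi_{t-1}-\pi^*}$ as $D_R(\pi^*, \pi_{t-1}) - D_R(\pi^*, h_t) + D_R(\pi_{t-1}, h_t)$. The last term is the gradient-magnitude term; bounding it via the $\alpha$-strong convexity of $R$ and Young's inequality gives $D_R(\pi_{t-1}, h_t) \le \frac{\eta^2}{2\alpha}\norm{\widehat{\nabla}_{t-1}}_*^2$. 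For the middle term, if the projection $\pi_t^* = \argmin_{\pi\in\Pi} D_R(\pi, h_t)$ were exact, the generalized Pythagorean inequality for Bregman projections would give $D_R(\pi^*, h_t) \ge D_R(\pi^*, \pi_t^*)$, so that summing over $t$ makes $\sum_t [D_R(\pi^*, \pi_{t-1}) - D_R(\pi^*, \pi_t^*)]$ telescope to at most $D_R(\pi^*, \pi_0)$.

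The delicate part is that the projection is only approximate. I would write $D_R(\pi^*, \pi_t^*) = D_R(\pi^*, \pi_t) - [D_R(\pi^*, \pi_t) - D_R(\pi^*, \pi_t^*)]$ and bound the bracketed perturbation using smoothness of $R$ and a bounded diameter of $\Pi$, yielding $|D_R(\pi^*, \pi_t) - D_R(\pi^*, \pi_t^*)| = O(\epsilon)$; in the $L2$ case, where $D_R(u,v)=\frac12\norm{u-v}^2$, this is immediate from $\norm{\pi_t - \pi_t^*}\le \epsilon$ and the reverse triangle inequality. Each such term enters with a factor $1/\eta$, so after telescoping the approximate projection contributes $O(\epsilon)/\eta$ to the averaged regret, giving $\frac1T\sum_t \inner{\widehat{\nabla}_{t-1}, \pi_{t-1}-\pi^*} \le \frac{D_R(\pi^*,\pi_0)}{\eta T} + \frac{O(\epsilon)}{\eta} + \frac{\eta}{2\alpha T}\sum_t \norm{\widehat{\nabla}_{t-1}}_*^2$.

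Finally I would take expectations. Conditioning on the history through $\pi_{t-1}$, the zero-mean noise part of the gradient-error term vanishes (since $\pi_{t-1}-\pi^*$ is measurable), leaving only the bias, which is at most $\beta$ times the diameter by Cauchy--Schwarz --- this is the irreducible additive $O(\beta)$. Using the variance bound to control $\mathbb{E}\norm{\widehat{\nabla}_{t-1}}_*^2$ by (essentially) $\sigma^2$, the expected regret is $O\big(\tfrac1\eta(\tfrac1T+\epsilon) + \eta\sigma^2 + \beta\big)$; balancing the first two terms yields exactly $\eta = \frac1\sigma\sqrt{\frac1T+\epsilon}$ and the claimed $O(\sigma\sqrt{\frac1T+\epsilon}+\beta)$. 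The main obstacle is handling the approximate projection and the biased, noisy gradient simultaneously in the function-space setting: one must verify the Bregman machinery carries over with subgradients, control the projection perturbation as a genuine $O(\epsilon)/\eta$ contribution rather than a blow-up, and execute the martingale cancellation that makes this an \emph{expected} (rather than merely high-probability) bound. The bias cannot be tuned away by $\eta$, which is precisely why it survives as the additive $\beta$.
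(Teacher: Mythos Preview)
Your proposal is correct and follows essentially the same route as the paper's proof: convexity gives $J(\pi_t)-J(\pi^*)\le\inner{\nabla_t,\pi_t-\pi^*}$, the mirror-descent three-point identity plus the generalized Pythagorean inequality produce the telescoping, projection-error, and relative-improvement pieces, strong convexity of $R$ controls the relative improvement by $\tfrac{\eta^2}{2\alpha}\norm{\widehat{\nabla}_t}_*^2$, smoothness of $R$ turns the $\epsilon$-projection perturbation into an $O(\epsilon)/\eta$ term, and taking conditional expectation kills the zero-mean noise while Cauchy--Schwarz leaves the $\beta D$ bias. The only cosmetic difference is that the paper bounds $\mathbb{E}\norm{\widehat{\nabla}_t}_*^2$ by $\sigma^2+L_J^2$ (separating noise variance from the true-gradient norm via the Lipschitz constant of $J$) rather than by ``essentially $\sigma^2$,'' but this does not affect the $O(\cdot)$ conclusion.
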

The result shows that error $\epsilon$ from \project and the bias $\beta$ do not accumulate and simply contribute an additive term on the expected regret.\footnote{Other mirror descent-style analyses, such as in \cite{sun2018dual}, lead to accumulation of errors over the rounds of learning $T$.  One key difference is that we are leveraging the assumption of convexity of $J$ in the (infinite-dimensional) function space representation.} The effect of variance of gradient estimate decreases at a $\sqrt{1/T}$ rate. Note that this regret bound is agnostic to the specific \update and \project operations, and can be applied more generically beyond the specific algorithmic choices used in our paper. 

\subsection{Finite-Sample Analysis under Vanilla Policy Gradient Update and DAgger Projection}
\label{sec:finite_sample}
Next, we show how certain instantiations of \update and \project affect the magnitude of errors and influence end-to-end learning performance from finite samples, under some simplifying assumptions on the \update step. For this analysis, we simplify Alg. \ref{alg:update} into the case $\update_\F \equiv \update_\HH$. In particular, we assume programmatic policies in $\Pi$ to be parameterized by a vector $\theta\in\mathbb{R}^k$, and $\pi$ is differentiable in $\theta$ (e.g., we can view $\Pi\subset\F$ where $\F$ is parameterized in $\mathbb{R}^k$). We further assume the trajectory roll-out is performed in an exploratory manner, where action is taken uniformly random over finite set of $A$ actions, thus enabling the bound on the bias of gradient estimates via Bernstein's inequality. The \project step is consistent with Alg. \ref{alg:project}, i.e., using DAgger \cite{ross2011reduction} under convex imitation loss, such as $\ell_2$ loss. We have the following high-probability guarantee:
\begin{thm}[Finite-sample guarantee]
\label{thm:finite_sample_main_paper}
At each iteration, we perform vanilla policy gradient estimate of $\pi$ (over $\HH$) using $m$ trajectories and, use DAgger algorithm to collect $M$ roll-outs for the imitation learning projection. Setting the learning rate $\eta = \sqrt{\frac{1}{\sigma^2}\big(\frac{1}{T} +\frac{H}{M} + \sqrt{\frac{\log(T/\delta)}{M}}\big)}$, after $T$ rounds of the algorithm, we have that:
\begin{small}
$$\frac{1}{T}\sum_{t=1}^T J(\pi_t) - J(\pi^*) \leq O\left(\sigma\sqrt{\frac{1}{T} + \frac{H}{M} +\sqrt{\frac{\log(T/\delta)}{M}}}\right) + O\left(\sigma\sqrt{\frac{\log(Tk/\delta)}{m}}+\frac{AH\log(Tk/\delta)}{m}\right) $$
\end{small}
\hspace{-3pt}holds with probability at least $1-\delta$, with $H$ being the task horizon, $A$ the cardinality of action space, $\sigma^2$ the variance of policy gradient estimates, and $k$ the dimension $\Pi$'s parameterization.
\end{thm}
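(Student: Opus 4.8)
The plan is to instantiate the three generic quantities of the expected regret bound (Theorem~\ref{thm:expected_regret_bound}) --- the gradient bias $\beta$, the gradient variance $\sigma^2$, and the projection error $\epsilon$ --- with concrete high-probability bounds determined by the $m$-trajectory vanilla policy gradient update and the $M$-roll-out DAgger projection, and then to substitute the stated step size $\eta$. I would organize the argument into three pieces: (i) a projection-error bound for DAgger that produces the $\tfrac{H}{M} + \sqrt{\log(T/\delta)/M}$ contribution, (ii) a Bernstein-based gradient-deviation bound that produces the $\sigma\sqrt{\log(Tk/\delta)/m} + AH\log(Tk/\delta)/m$ contribution, and (iii) a union bound that stitches the per-iteration high-probability events together and feeds the results back into Theorem~\ref{thm:expected_regret_bound}.

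First, for the \project step I would invoke the finite-sample guarantee of DAgger \cite{ross2011reduction} under a convex (e.g.\ $\ell_2$) surrogate imitation loss. Since DAgger is a no-regret online learner, its output over $M$ roll-outs admits an online-to-batch guarantee bounding the expected per-state imitation disagreement with the oracle $h_t$ by the learner's average regret plus a finite-sample concentration term. Converting this per-step surrogate loss into the function-space projection error $\norm{\pi_t - \pi_t^*}$ incurs the horizon factor $H$ (the standard distribution-mismatch/performance-difference scaling in DAgger), yielding $\epsilon = O(H/M)$ from the no-regret term and $O(\sqrt{\log(1/\delta)/M})$ from concentrating the empirical loss over the $M$ roll-outs; union bounding this event over the $T$ projection steps replaces $\log(1/\delta)$ by $\log(T/\delta)$. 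Substituting this $\epsilon$ into the $\sigma\sqrt{1/T + \epsilon}$ term of Theorem~\ref{thm:expected_regret_bound} reproduces exactly the first $O(\cdot)$ term.

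Second, for the \update step I would analyze the $m$-trajectory vanilla policy gradient estimator under uniform exploration over the $A$ actions. Each per-coordinate estimate is an average of $m$ i.i.d.\ importance-weighted trajectory returns; uniform exploration caps the importance weight by $A$ and the truncated return by $O(H)$, so a single-trajectory estimate has range $O(AH)$ and variance $O(\sigma^2)$. Bernstein's inequality then bounds, with probability $1-\delta'$, the deviation of the empirical gradient from the true gradient in a single coordinate by $O\big(\sigma\sqrt{\log(1/\delta')/m} + AH\log(1/\delta')/m\big)$; taking $\delta' = \delta/(Tk)$ and union bounding over the $k$ coordinates and the $T$ iterations gives the effective bias bound $\beta = O\big(\sigma\sqrt{\log(Tk/\delta)/m} + AH\log(Tk/\delta)/m\big)$, which is the second $O(\cdot)$ term after passing through the additive $\beta$ contribution of Theorem~\ref{thm:expected_regret_bound}.

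Finally, I would combine the two high-probability events with the deterministic part of the mirror-descent regret analysis: on the event that all $\epsilon$- and $\beta$-bounds hold simultaneously (total failure probability $\le \delta$ by the union bound), the realized sequence $\pi_1,\dots,\pi_T$ satisfies the Theorem~\ref{thm:expected_regret_bound} inequality with the instantiated $\epsilon$ and $\beta$, and the stated $\eta$ is precisely the regret-optimizing step size with $\epsilon$ made explicit. The main obstacle I anticipate is the projection piece: cleanly translating DAgger's no-regret surrogate-loss guarantee into a function-space bound on $\norm{\pi_t - \pi_t^*}$ with the correct horizon dependence, and reconciling the high-probability statement of Theorem~\ref{thm:finite_sample_main_paper} with the in-expectation statement of Theorem~\ref{thm:expected_regret_bound}. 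This last point requires either a high-probability version of the variance term (via a martingale concentration of the gradient-noise contribution to the regret) or an argument that the substituted high-probability $\epsilon,\beta$ may be inserted directly into the deterministic portion of the regret decomposition. By comparison, the Bernstein step for the gradient is routine once the estimator's range and variance are pinned down.
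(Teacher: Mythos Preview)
Your proposal is correct and mirrors the paper's proof: the paper likewise instantiates Theorem~\ref{thm:expected_regret_bound} by (i) bounding the per-round projection error via a DAgger finite-sample lemma giving $\epsilon = O(H/M + \sqrt{\log(1/\delta)/M})$, (ii) bounding the gradient deviation coordinate-wise via Bernstein under uniform exploration to get $\beta = O(\sigma\sqrt{\log(k/\delta)/m} + AH\log(k/\delta)/m)$, and (iii) union-bounding both over $T$ rounds before substituting into the regret inequality with the stated $\eta$. On your flagged obstacle, the paper's DAgger lemma is actually \emph{simpler} than the usual performance-difference route because it only needs $\norm{\pi_t - \pi_t^*}$ (a pure regret/imitation-loss quantity) rather than $J(\pi_t)-J(h_t)$, so the $H$ factor enters through the DAgger regret term $O((1+H)/M)$ rather than through a distribution-mismatch compounding argument; and the expectation-vs-high-probability tension you note is handled exactly as you suggest, by plugging the high-probability $\epsilon,\beta$ directly into the deterministic part of the per-round regret decomposition.
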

The expanded result and proof are included in Appendix \ref{sec:app_finite_sample}. The proof leverages previous analysis from DAgger \cite{dagger} and the finite sample analysis of vanilla policy gradient algorithm \cite{kakade2003sample}. The finite-sample regret bound scales linearly with the standard deviation $\sigma$ of the gradient estimate, while the bias, which is the very last component of the RHS, scales linearly with the task horizon $H$. Note that the standard deviation $\sigma$ can be exponential in task horizon $H$ in the worst case \cite{kakade2003sample}, and so it is important to have practical implementation strategies to reduce the variance of the $\update$ operation. While conducted in a stylized setting, this analysis provides insight in the relative trade-offs of spending effort in obtaining more accurate projections versus more reliable gradient estimates.

\subsection{Closing the gap between $\update_\HH$ and $\update_\F$ }
\label{sec:closing_gap}
Our functional mirror descent analysis rests on taking gradients in $\HH$: $\update_\HH(\pi)$ involves  estimating $\nabla_\HH J(\pi)$ in the $\HH$ space. On the other hand, Algorithm  \ref{alg:update} performs $\update_\F(\pi)$ only in the neural policy space $\F$. In either case, although $J(\pi)$ may be differentiable in the non-parametric ambient policy space, it may not be possible to obtain a differentiable parametric  programmatic representation in $\Pi$. In this section, we discuss theoretical motivations to addressing a practical issue: \emph{How do we define and approximate the gradient $\nabla_\HH J(\pi)$ under a parametric representation?}  To our knowledge, we are the first to consider such a theoretical question for reinforcement learning.

\textbf{Defining a consistent approximation of $\nabla_\HH J(\pi)$.}
 The idea in $\update_\F(\pi)$ (Line 8 of Alg. \ref{alg:ippg}) is to approximate $\nabla_\HH J(\pi)$ by $\nabla_\F J(f)$, which has a differentiable representation, at some $f$ close to $\pi$ (under the norm). Under appropriate conditions on $\F$, we show that this approximation is valid. 
\begin{prop}
\label{prop:continuity_gradient}
Assume that (i) $J$ is Fr\'echet differentiable on $\HH$, (ii) $J$ is also differentiable on the restricted subspace $\F$, and (iii) $\F$ is dense in $\HH$ (i.e., the closure $\widebar{\F} = \HH$). Then for any fixed policy $\pi\in\Pi$, define a sequence of policies $f_k\in\F$, $k=1,2,\ldots$), that converges to $\pi$: $\lim_{k\rightarrow\infty}\norm{f_k-\pi} = 0$. We then have $\lim_{k\rightarrow\infty}\norm{\nabla_\F J(f_k) - \nabla_\HH J(\pi)}_* = 0$.
\end{prop}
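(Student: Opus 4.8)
The plan is to prove the convergence by splitting the quantity of interest into a ``projection'' part and a ``gradient continuity'' part, via the triangle inequality
$$\norm{\nabla_\F J(f_k) - \nabla_\HH J(\pi)}_* \leq \norm{\nabla_\F J(f_k) - \nabla_\HH J(f_k)}_* + \norm{\nabla_\HH J(f_k) - \nabla_\HH J(\pi)}_*,$$
and then showing that each term on the right vanishes as $k\to\infty$. The first term isolates the discrepancy between differentiating in the restricted subspace $\F$ versus the ambient space $\HH$ at the \emph{same} point $f_k$, while the second term is a pure continuity-of-the-gradient statement along the sequence $f_k\to\pi$.

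For the first term I would argue that density forces it to be identically zero. For every direction $g\in\F$, the defining Fr\'echet limits of $\nabla_\F J(f_k)$ and $\nabla_\HH J(f_k)$ agree along $g$ (they both equal the directional derivative of $J$ at $f_k$ in the direction $g$), so $\inner{\nabla_\F J(f_k),g}=\inner{\nabla_\HH J(f_k),g}$, i.e.\ $\nabla_\F J(f_k)-\nabla_\HH J(f_k)$ is orthogonal to $\F$. By assumption (iii), $\overline{\F}=\HH$, so this difference is orthogonal to all of $\HH$ and is therefore $0$. Here assumption (ii) is used only to guarantee that the restricted representer $\nabla_\F J(f_k)$ exists (equivalently, that the restricted derivative is represented in $\overline{\F}=\HH$), and density is exactly the hypothesis that collapses the orthogonal projection onto $\overline{\F}$ to the identity.

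The substance of the argument, and the step I expect to be the main obstacle, is the second term: continuity of the map $h\mapsto\nabla_\HH J(h)$ at $\pi$, so that $f_k\to\pi$ in norm forces $\nabla_\HH J(f_k)\to\nabla_\HH J(\pi)$ in dual norm. The difficulty is that Fr\'echet differentiability at every point (assumption (i)) does \emph{not} by itself make the gradient map continuous. To close this I would invoke the convexity of $J$ on $\HH$ assumed in the Preliminaries: for a continuous convex function, Fr\'echet differentiability at $\pi$ is equivalent, by \v{S}mulian's theorem, to the statement that for every sequence $h_k\to\pi$ and every choice of subgradients $v_k\in\partial J(h_k)$ one has $v_k\to\nabla_\HH J(\pi)$ in norm. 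Since $J$ is differentiable at each $f_k$, the subdifferential is the singleton $\partial J(f_k)=\{\nabla_\HH J(f_k)\}$, and the conclusion is immediate.

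If a self-contained argument is preferred to citing \v{S}mulian, I would instead combine the monotonicity of $\partial J$ (from convexity) with the quantitative Fr\'echet estimate at $\pi$: given $\epsilon>0$, pick $\delta$ so that $J(\pi+h)\leq J(\pi)+\inner{\nabla_\HH J(\pi),h}+\epsilon\norm{h}$ whenever $\norm{h}\leq\delta$, and then test the subgradient inequalities at $\pi$ and at $f_k$ against $h=\delta u$ over unit vectors $u$. Using monotonicity to discard a nonnegative cross term and the local boundedness of gradients of a continuous convex function to control $\norm{\nabla_\HH J(f_k)}_*$, this yields $\norm{\nabla_\HH J(f_k)-\nabla_\HH J(\pi)}_*\leq 2\epsilon$ for all large $k$, hence convergence. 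Combining the two vanishing terms finishes the proof. The single delicate point to state carefully is the interpretation of $\nabla_\F J(f_k)$ as a representer in $\overline{\F}=\HH$ in the first step, which is precisely where the density assumption is indispensable.
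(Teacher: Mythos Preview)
Your proposal is correct and follows the same two-step structure as the paper's proof: first identify $\nabla_\F J(f_k)$ with $\nabla_\HH J(f_k)$ (the paper invokes uniqueness of the Fr\'echet gradient in the ambient space; your orthogonality-plus-density argument is equivalent), then argue $\nabla_\HH J(f_k)\to\nabla_\HH J(\pi)$. The only substantive difference is in how the second step is justified. The paper's proof reads, in its entirety for that step, ``Since Fr\'echet derivative is a continuous linear operator, we have $\lim_{k\rightarrow\infty}\norm{\nabla_\HH J(f_k) - \nabla_\HH J(\pi)}_* = 0$,'' which at face value conflates continuity of the derivative \emph{at a point} (boundedness of the linear functional $\nabla_\HH J(h)$) with continuity of the gradient \emph{map} $h\mapsto\nabla_\HH J(h)$---and it is the latter that is actually needed. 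You correctly flag that pointwise Fr\'echet differentiability does not by itself give this, and you close the gap via convexity and \v{S}mulian's theorem (or the direct subgradient estimate). So your argument is essentially the paper's argument with the continuity step made rigorous; what it buys is correctness at the cost of explicitly invoking convexity of $J$, which the paper's one-line proof does not mention but which is assumed elsewhere in the Preliminaries.
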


Since the Fr\'echet gradient is unique in the ambient space $\HH$, $\forall k$ we have $\nabla_\HH J(f_k) = \nabla_\F J(f_k)\rightarrow \nabla_\HH J(\pi)$ as $k\rightarrow\infty$ (by Proposition \ref{prop:continuity_gradient}). We thus have an asymptotically unbiased approximation of $\nabla_\HH J(\pi)$ via differentiable space $\F$ as: $\nabla_\F J(\pi) \triangleq \nabla_\HH J(\pi) \triangleq \lim_{k\rightarrow\infty} \nabla_\F J(f_k)$.\footnote{We do not assume $J(\pi)$ to be differentiable when restricting to the policy subspace $\Pi$, i.e., $\nabla_\Pi J(\pi)$ may not exist under policy parameterization of $\Pi$.} Connecting to the result from Theorem \ref{thm:expected_regret_bound}, let $\sigma^2$ be an upper bound on the policy gradient estimates in the \emph{neural policy class $\F$}, under an asymptotically unbiased approximation of $\nabla_\HH J(\pi)$, the expected regret bound becomes $\mathbb{E}\left[ \frac{1}{T}\sum_{t=1}^T J(\pi_t)\right] - J(\pi^*) = O\left( \sigma\sqrt{\frac{1}{T}+\epsilon}\right)$.

\textbf{Bias-variance considerations of $\update_\F(\pi)$} To further theoretically motivate a practical strategy for $\update_\F(\pi)$ in Algorithm \ref{alg:update}, we utilize an equivalent proximal perspective of mirror descent \cite{beck2003mirror}, where $\update_\HH(\pi)$ is equivalent to solving for $h^\prime = \argmin_{h\in\HH} \eta\inner{\nabla_\HH J(\pi),h} + D_R(h, \pi)$. 
\begin{prop}[Minimizing a relaxed objective]
\label{prop:relaxed_objective}
For a fixed programmatic policy $\pi$, with sufficiently small constant $\lambda\in (0,1)$, we have that
\begin{equation}
\min_{h\in\HH} \eta\inner{\nabla_\HH J(\pi),h )}+D_R(h,\pi) \leq \min_{f\in\F} J\big(\pi + \lambda f \big) - J(\pi) + \inner{\nabla J(\pi),\pi}    \label{eqn:analysis_1}
\end{equation}
\end{prop}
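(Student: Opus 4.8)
The plan is to exhibit a feasible point for the left-hand minimization whose objective value is controlled by the relaxed objective on the right, and then invoke convexity of $J$ to close the gap. The starting observation is that, since $\pi\in\Pi$ is fixed and $\F$ is a subspace of $\HH$, the affine slice $\{\pi + \lambda f : f\in\F\}$ is contained in $\HH$ for every $\lambda$. Minimizing the proximal objective $\eta\inner{\nabla_\HH J(\pi), h} + D_R(h,\pi)$ over this smaller set can only increase its value, so it suffices to bound $\min_{f\in\F}\big[\eta\inner{\nabla_\HH J(\pi), \pi + \lambda f} + D_R(\pi + \lambda f,\pi)\big]$ from above by the right-hand side.

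First I would expand the restricted objective. By bilinearity of the inner product, $\inner{\nabla_\HH J(\pi), \pi + \lambda f} = \inner{\nabla_\HH J(\pi), \pi} + \lambda\inner{\nabla_\HH J(\pi), f}$, which isolates the constant term $\inner{\nabla_\HH J(\pi), \pi}$ appearing on the right and leaves an $f$-dependent linear term $\lambda\inner{\nabla_\HH J(\pi), f}$. The crux of the relaxation is to replace this inaccessible $\HH$-gradient linear functional by a quantity that the neural update in Algorithm~\ref{alg:update} can actually optimize. For this I would apply the first-order convexity inequality for $J$ at $\pi$, namely $\lambda\inner{\nabla_\HH J(\pi), f} \leq J(\pi + \lambda f) - J(\pi)$, valid because $J$ is convex on $\HH$ with Fr\'echet gradient $\nabla_\HH J(\pi)$. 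This single step converts the linear term into (a multiple of) the function-value gap $J(\pi + \lambda f) - J(\pi)$ that constitutes the relaxed objective.

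What remains after these substitutions is the Bregman term $D_R(\pi + \lambda f,\pi)$ together with the leftover scalar factors. The key point is that this Bregman remainder is second order in $\lambda$: for the canonical choice $R(h) = \frac{1}{2}\norm{h}^2$ it equals $\frac{\lambda^2}{2}\norm{f}^2$, and for a general $\beta_R$-smooth mirror map it is bounded by $\frac{\beta_R}{2}\lambda^2\norm{f}^2$. Since the convexity step contributes a first-order-in-$\lambda$ gain while the Bregman remainder is $O(\lambda^2)$, choosing $\lambda$ small enough makes the remainder negligible relative to the first-order term --- this is precisely the role of the hypothesis that $\lambda\in(0,1)$ is a sufficiently small constant. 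Collecting the constant term $\inner{\nabla_\HH J(\pi), \pi}$, the converted gap $J(\pi + \lambda f) - J(\pi)$, and the controlled remainder, and finally taking the minimum over $f\in\F$, yields the claimed inequality.

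I expect the main obstacle to be bookkeeping the multiplicative constants --- reconciling the step-size factor $\eta$ on the left with the unit coefficients on the right, and verifying that the $O(\lambda^2)$ Bregman remainder is genuinely dominated by the first-order decrease. Making this quantitative requires two ingredients beyond convexity: smoothness of the mirror map $R$ (to obtain the quadratic bound on $D_R(\pi + \lambda f,\pi)$), and a bound on $\norm{\nabla_\HH J(\pi)}_*$ or on $\norm{f}$ over the relevant set (i.e.\ Lipschitz continuity of $J$) to make the threshold on $\lambda$ explicit. Everything else is routine expansion, so the care lies entirely in choosing $\lambda$ below this threshold so that the first-order convexity gain controls the higher-order terms.
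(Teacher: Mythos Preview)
Your approach is essentially correct and very close in spirit to the paper's, but the paper handles the Bregman term by a slightly different device that is worth knowing. You keep the Lagrangian form throughout: restrict to $h=\pi+\lambda f$, expand, apply the first-order convexity inequality $\lambda\inner{\nabla_\HH J(\pi),f}\le J(\pi+\lambda f)-J(\pi)$, and then argue that the leftover $D_R(\pi+\lambda f,\pi)=O(\lambda^2)$ is dominated for small $\lambda$. The paper instead first invokes the penalty/constraint duality for convex problems to rewrite the left-hand side as the constrained problem $\min_h\inner{\partial J(\pi),h}$ subject to $D_R(h,\pi)\le\tau(\eta)$; it then restricts to $h=\pi+\lambda f$ with $\lambda$ chosen small enough that the Bregman-ball constraint is \emph{automatically} satisfied, so the $D_R$ term disappears from the objective entirely rather than having to be bounded as a remainder. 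After that, the convexity step is identical to yours.

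What each approach buys: the paper's conversion to constrained form sidesteps the bookkeeping you flag (the $\eta$ factor and the explicit $O(\lambda^2)$ comparison), since the Bregman term is absorbed into the constraint and the linear objective carries no $\eta$; the price is that the ``equivalence'' between the penalized and constrained optimal \emph{values} is left informal. Your direct route is more transparent about where the smallness of $\lambda$ enters and about which constants need to line up, but as you correctly note it requires smoothness of $R$ and a bound on $\norm{f}$ (or Lipschitzness of $J$) to make the threshold on $\lambda$ explicit. Both arguments are at the same level of rigor; neither resolves the scalar mismatch between $\eta\inner{\nabla J(\pi),\pi}$ on the left and $\inner{\nabla J(\pi),\pi}$ on the right, which the paper also leaves loose.
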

Thus, a relaxed $\update_\HH$ step is obtained by minimizing the RHS of  \eqref{eqn:analysis_1}, i.e., minimizing $J(\pi+ \lambda f)$ over $f\in\F$. Each gradient descent update step is now $f^\prime = f - \eta \lambda \nabla_\F J(\pi_t + \lambda f)$, corresponding to Line 5 of Algorithm \ref{alg:update}. 
For fixed $\pi$ and small $\lambda$, this relaxed optimization problem becomes regularized policy optimization over $\F$, which is significantly easier. Functional regularization in policy space around a fixed prior controller $\pi$ has demonstrated significant reduction in the variance of gradient estimate \cite{corerl}, at the expense of some bias. The below expected regret bound summarizes the impact of this increased bias and reduced variance, with details included in Appendix \ref{sec:app_theoretical_motivation}.
\begin{prop}[Bias-variance characterization of $\update_\F$]
\label{prop:bias_variance_characterization}
Assuming $J(h)$ is $L$-strongly smooth over $\HH$, i.e., $\nabla_\HH J(h)$ is $L$-Lipschitz continuous, approximating $\update_\HH$ by $\update_F$ per Alg. \ref{alg:update} leads to the expected regret bound: $\mathbb{E}\left[ \frac{1}{T}\sum_{t=1}^T J(\pi_t)\right] - J(\pi^*) = O\left( \lambda\sigma\sqrt{\frac{1}{T}+\epsilon} + \lambda^2 L^2\right)$.
\end{prop}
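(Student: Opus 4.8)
The plan is to derive Proposition \ref{prop:bias_variance_characterization} as a corollary of the generic regret bound in Theorem \ref{thm:expected_regret_bound}, by characterizing the bias and variance of the effective gradient estimate that the relaxed update $\update_\F$ of Algorithm \ref{alg:update} induces in the ambient space $\HH$. Concretely, I would show that this estimate has variance scaled down to order $\lambda^2\sigma^2$ and bias of order $\lambda^2 L^2$; substituting $\sigma \mapsto \lambda\sigma$ and $\beta = O(\lambda^2 L^2)$ into \eqref{eqn:expected_regret_bound} then gives $O(\lambda\sigma\sqrt{1/T + \epsilon} + \lambda^2 L^2)$. This substitution is legitimate precisely because Theorem \ref{thm:expected_regret_bound} already guarantees that the bias and projection error enter additively and do not compound across the $T$ rounds.

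First I would make the effective gradient explicit. The update returns $h = \pi + \lambda f_m$, so relative to the ideal step $h_t = \pi_{t-1} - \eta\widehat{\nabla}_{t-1}$ (with mirror map $R(h) = \tfrac{1}{2}\norm{h}^2$), the displacement from $\pi$ in $\HH$ is $\lambda f_m$, i.e. the effective estimate is $\widehat{\nabla}_{t-1} = -\lambda f_m / \eta$. Because the learned component enters only through the factor $\lambda$, both the magnitude and the stochastic fluctuation of each inner step $f' = f - \eta\lambda\nabla_\F J(\pi + \lambda f)$ are scaled by $\lambda$. I would use the relaxed/proximal view of Proposition \ref{prop:relaxed_objective} to formalize this, concluding that the variance of the effective gradient estimate is reduced by a factor $\lambda^2$, so its standard deviation is $\lambda\sigma$. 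This is the variance-reduction effect of functional regularization around a fixed reference controller \cite{corerl}, and it produces the $\lambda\sigma\sqrt{1/T+\epsilon}$ term.

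Next I would bound the bias incurred by optimizing the relaxed objective $J(\pi + \lambda f)$ in place of the true $\update_\HH$ proximal objective. Invoking $L$-strong smoothness, a second-order expansion of $J$ around $\pi$ gives $J(\pi + \lambda f) - J(\pi) - \lambda\inner{\nabla_\HH J(\pi), f} \le \tfrac{L}{2}\lambda^2\norm{f}^2$, so the first-order term reproduces the genuine gradient step in $\HH$ and the leading correction is quadratic in $\lambda$ and governed by the curvature bound $L$. Combining this with Proposition \ref{prop:continuity_gradient} (so that the gradient taken in $\F$ at $\pi + \lambda f$ coincides with $\nabla_\HH J$) and with the $L$-Lipschitz continuity of $\nabla_\HH J$, I would bound the relaxation gap --- the discrepancy between the left- and right-hand sides of \eqref{eqn:analysis_1} --- by $O(\lambda^2 L^2)$, which serves as the effective bias $\beta$.

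The main obstacle is establishing the bias at order $\lambda^2 L^2$ rather than a cruder first-order estimate $O(\lambda L)$, and in particular tracking the correct power of $L$. The delicate part is showing that the first-order contribution aligns with the true gradient step and therefore does not contribute to the bias, leaving only the second-order curvature term, and then bounding that term while accounting for how the size of the optimizing direction $f$ scales with $\lambda$ and $L$. Once the effective variance $\lambda^2\sigma^2$ and bias $O(\lambda^2 L^2)$ are pinned down, the remainder of the argument is a direct substitution into Theorem \ref{thm:expected_regret_bound}.
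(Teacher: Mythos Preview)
Your high-level plan --- characterize effective bias $\beta = O(\lambda^2 L^2)$ and effective variance $(\lambda\sigma)^2$, then invoke Theorem~\ref{thm:expected_regret_bound} as a black box --- is \emph{not} the route the paper takes. The paper explicitly abandons the bias-of-gradient-estimate viewpoint and instead re-runs the mirror-descent analysis from scratch in the proximal formulation: it defines the per-round error as $\beta_t = \nabla R(h_{t+1}) - \nabla R(h_{t+1}^*)$, where $h_{t+1}^*$ is the exact proximal minimizer, and carries this through the usual three-part decomposition (telescoping, projection error, relative improvement). The $\lambda\sigma$ factor then enters only through the learning-rate choice $\eta = \sqrt{\tfrac{1}{\lambda^2\sigma^2}(\tfrac{1}{T}+\epsilon)}$ at the end, not via a separate variance-of-estimate argument. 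What your reduction buys is modularity; what the paper's re-derivation buys is that it never has to identify an ``effective gradient estimate'' and bound its moments, which is awkward here because the update is the output of an inner optimization rather than a single stochastic gradient.

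Your bias step has a real gap, and it is exactly the obstacle you flag. The inequality $J(\pi+\lambda f) - J(\pi) - \lambda\inner{\nabla_\HH J(\pi), f} \le \tfrac{L}{2}\lambda^2\norm{f}^2$ bounds a \emph{cost} discrepancy, not the dual-norm quantity $\norm{\widehat{\nabla}_t - \nabla J(\pi_t)}_*$ that Theorem~\ref{thm:expected_regret_bound} needs, and it delivers only one power of $L$. The paper obtains $\norm{\beta_t} = O(\eta\lambda^2 L^2)$ by a two-sided sandwich: the smoothness upper bound $J(\pi_t+\lambda f_{t+1}) \le J(\pi_t+\omega f_{t+1}^*) \le J(\pi_t) + \omega\inner{\nabla J(\pi_t), f_{t+1}^*} + \tfrac{L\omega^2}{2}\norm{f_{t+1}^*}^2$ (for any $\omega$, using optimality of $f_{t+1}$) is combined with the convexity lower bound $J(\pi_t+\lambda f_{t+1}) \ge J(\pi_t) + \lambda\inner{\nabla J(\pi_t), f_{t+1}}$ and the closed form $f_{t+1}^* = -\eta\nabla J(\pi_t)$; one then solves a quadratic in $\omega$ and completes a square to control $\norm{f_{t+1}-f_{t+1}^*}$. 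The second power of $L$ comes from this quadratic optimization, which your Taylor-expansion sketch does not supply. So your strategy is viable in principle, but the missing ingredient --- converting the cost-gap bound into a bound on the minimizer displacement with the correct $L^2$ dependence --- is precisely the technical core of the paper's argument.
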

Compared to the idealized unbiased approximation in Proposition \ref{prop:continuity_gradient}, the introduced bias here is related to the inherent smoothness property of cost functional $J(h)$ over the joint policy class $\HH$, i.e., how close $J(\pi+\lambda f)$ is to its linear under-approximation $J(\pi)+\inner{\nabla_\HH J(\pi),\lambda f}$ around $\pi$.

\section{Experiments}
\label{sec:evaluation}

We demonstrate the effectiveness of  \ippg in synthesizing programmatic controllers in three continuous control environments. For brevity and focus, this section primarily focuses on \torcs\footnote{The code for the \torcs experiments can be found at: \href{https://bitbucket.org/averma8053/propel}{https://bitbucket.org/averma8053/propel}}, a challenging race car simulator environment \cite{TORCS}. Empirical results on two additional classic control tasks, Mountain-Car and Pendulum, are provided in Appendix~\ref{sec:appendix-experiments}; those results follow similar trends as the ones described for \torcs below, and further validate the convergence analysis of \ippg.

\begin{wrapfigure}{r}{6cm}
%    \centering
%    \begin{minipage}{7cm}
\vspace{-0.1in}
        \centering
        \includegraphics[width=6cm]{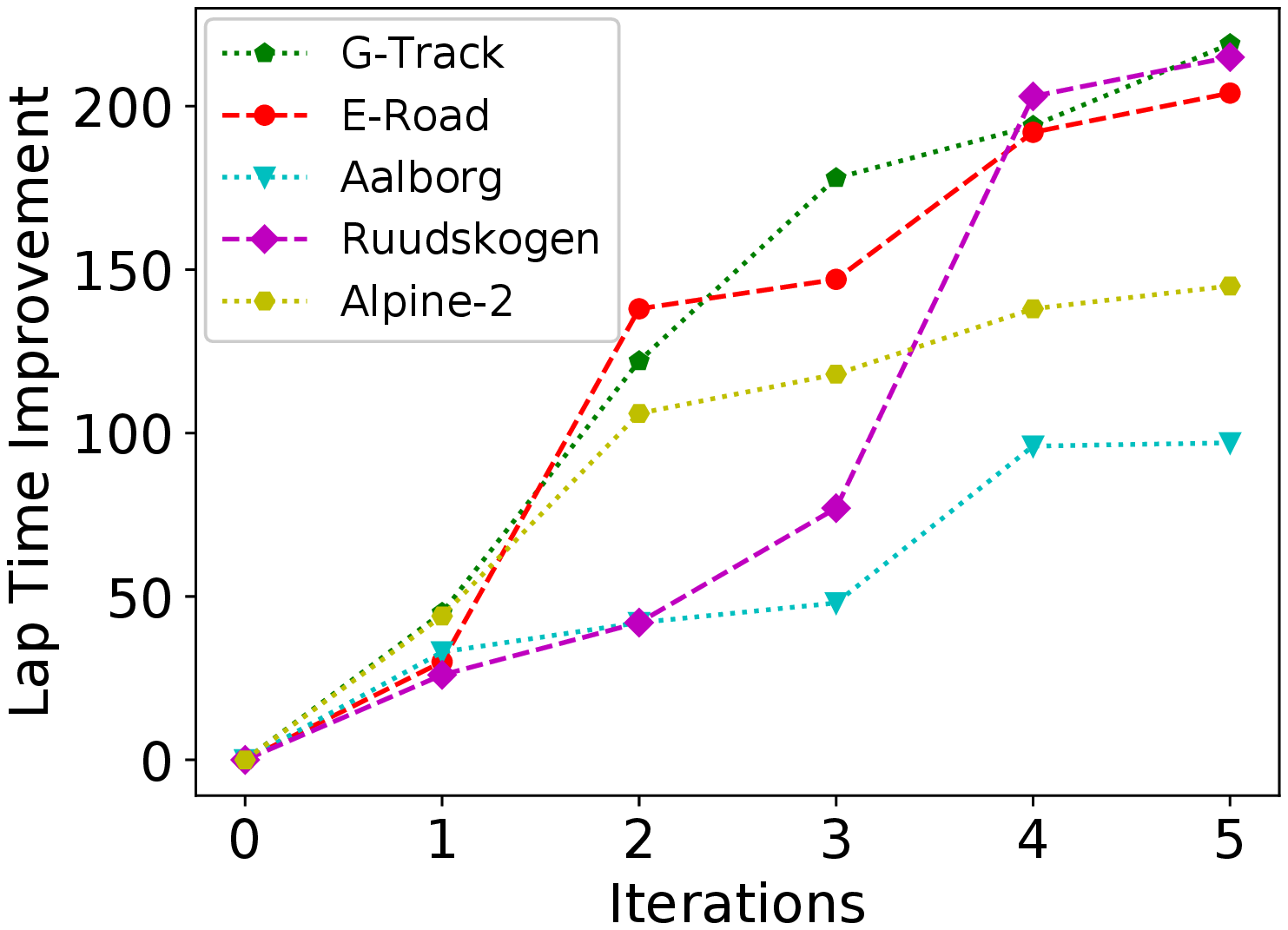}
        \vspace{-25pt}
        \caption{Median lap-time improvements during multiple iterations of \prog over $25$ random seeds.}
        \label{fig:improvements}
%    \end{minipage}%
\vspace{-0.2in}
 \end{wrapfigure}  
 
\textbf{Experimental Setup.} We evaluate over five distinct tracks in the \torcs simulator. The difficulty of a track can be characterized by three properties; track length, track width, and number of turns. Our suite of tracks provides environments with varying levels of difficulty for the learning algorithm. The performance of a policy in the \torcs simulator is measured by the \textit{lap time} achieved on the track. To calculate the lap time, the policies are allowed to complete a three-lap race, and we record the best lap time during this race. We perform the experiments with twenty-five random seeds and report the median lap time over these twenty-five trials. Some of the policies crash the car before completing a lap on certain tracks, even after training for $600$ episodes. Such crashes are recorded as a lap time of infinity while calculating the median. If the policy crashes for more than half the seeds, this is reported as \textsc{Cr} in Tables \ref{table:performance} \& \ref{table:generalization}. We choose to report the median because taking the crash timing as infinity, or an arbitrarily large constant, heavily skews other common measures such as the mean.

 \begin{wrapfigure}{r}{6cm}
 \vspace{-0.15in}
 %   \begin{minipage}{7cm}
        \centering
        \includegraphics[width=6cm]{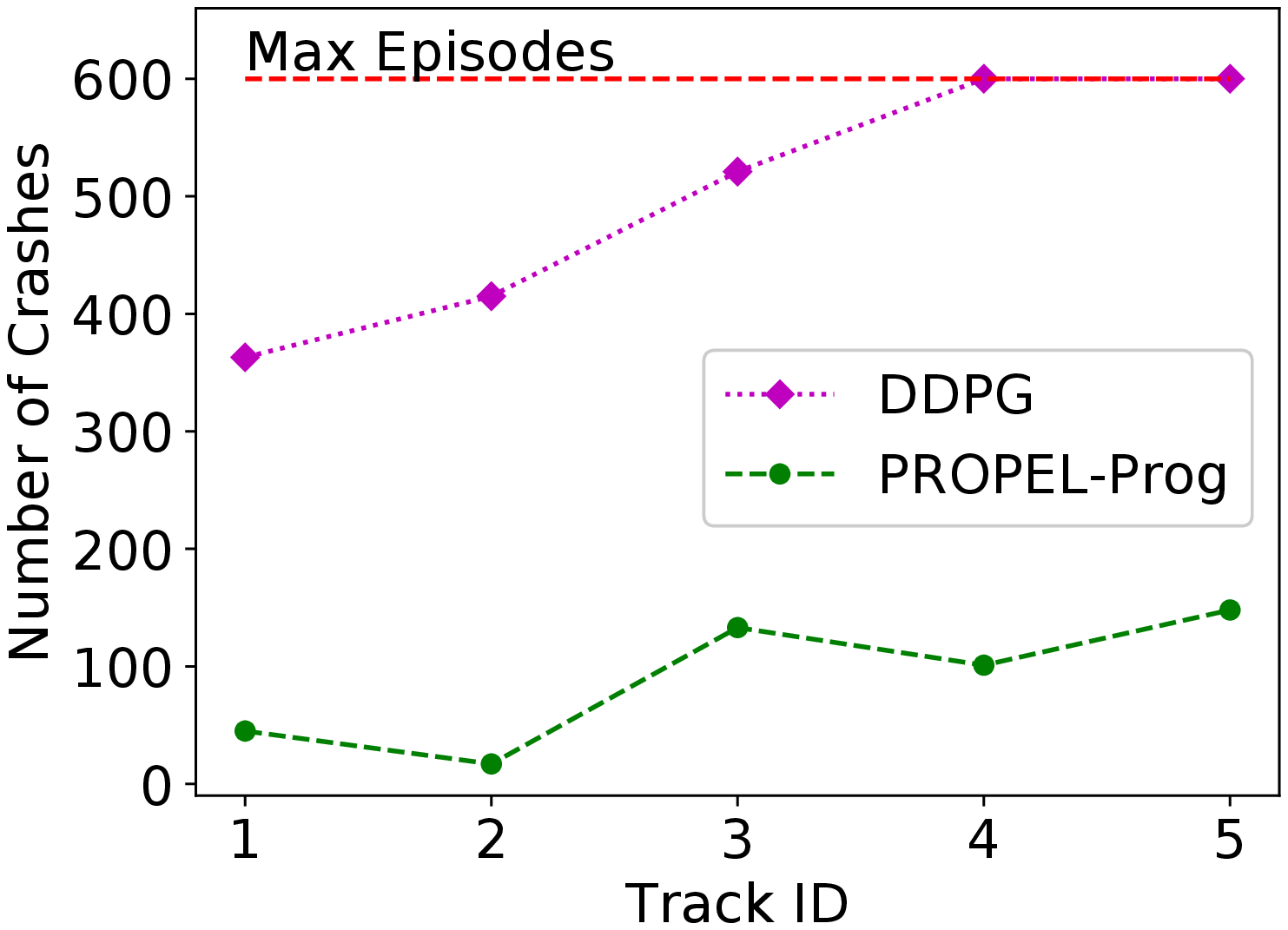}
        \vspace{-25pt}
        \caption{Median number of crashes during training of \ddpg and \prog over $25$ random seeds.}
        \label{fig:crashes}
%    \end{minipage}
    \vspace{-0.1in}
\end{wrapfigure}

\textbf{Baselines.} Among recent state-of-the-art approaches to learning programmatic policies are \ndps \cite{pirl} for high-level language policies, and \textsc{Viper} \cite{bastani2018verifiable} for learning  tree-based policies. Both \ndps and \textsc{Viper} rely on imitating a fixed (pre-trained) neural policy oracle, and can be viewed as degenerate versions of \ippg that only run Lines 4-6 in Algorithm \ref{alg:ippg}. We  present two \ippg analogues to \ndps and \textsc{Viper}: (i) \prog: \ippg using the high-level language of \figref{syntax} as the class of programmatic policies, similar to \ndps. (ii) \tree: \ippg using regression trees, similar to \textsc{Viper}. We also report results for \textsc{Prior}, which is a (sub-optimal) PID controller that is also used as the initial policy in \ippg. In addition, to study generalization ability as well as safety behavior during training, we also include \ddpg, a neural policy learned using the Deep Deterministic Policy Gradients~\cite{ddpg} algorithm, with $600$ episodes of training for each track. In principle, \ippg and its analysis can accommodate different policy gradient subroutines. However, in the \torcs domain, other policy gradient algorithms such as PPO and TRPO failed to learn policies that are able to complete the considered tracks. We thus focus on \ddpg as our main policy gradient component. 

 \begin{table}[t]
	\caption{\textit{Performance results in \torcs over 25 random seeds. Each entry is formatted as Lap-time / Crash-ratio, reporting median lap time in seconds over all the seeds (lower is better) and ratio of seeds that result in crashes (lower is better). A lap time of \textsc{Cr} indicates the agent crashed and could not complete a lap for more than half the seeds.}}
	\label{table:performance}
	\begin{center}
		\begin{small}
			\begin{sc}
				\begin{tabular}{l c c c c c}
					\toprule
					  & G-Track  & E-Road & Aalborg & Ruudskogen & Alpine-2  \\
					  Length & 3186m & 3260m & 2588m & 3274m & 3774m \\
					\midrule					
					Prior  & 312.92 / 0.0 & 322.59 / 0.0 & 244.19 / 0.0 & 340.29 / 0.0 & 402.89 / 0.0 \\
					Ddpg & 78.82 / 0.24 & 89.71 / 0.28 & 101.06 / 0.40 & Cr / 0.68 & Cr / 0.92 \\
					Ndps   & 108.25 / 0.24 & 126.80 / 0.28 & 163.25 / 0.40 & Cr / 0.68 & Cr / 0.92 \\
    				Viper & 83.60 / 0.24 & 87.53 / 0.28 & 110.57 / 0.40 & Cr / 0.68 & Cr / 0.92 \\
					\prog & 93.67 / 0.04 & 119.17 / 0.04 & 147.28 / 0.12 & 124.58 / 0.16 & 256.59 / 0.16 \\	    				
					\tree & 78.33 / 0.04 & 79.39 / 0.04 & 109.83 / 0.16 & 118.80 / 0.24 & 236.01 / 0.36\\

					\bottomrule
				\end{tabular}
			\end{sc}
		\end{small}
	\end{center}
%	\vspace{-0.05in}
\end{table}

\textbf{Evaluating Performance.} Table~\ref{table:performance} shows the performance on the considered \torcs tracks. We see that \prog~and \tree~consistently outperform the \ndps \cite{pirl} and \textsc{Viper} \cite{bastani2018verifiable} baselines, respectively.  While \ddpg outperforms \ippg on some tracks, its volatility causes it to be unable to learn in some environments, and hence to crash the majority of the time. Figure~\ref{fig:improvements} shows the consistent improvements made over the prior by \prog, over the iterations of the \ippg algorithm.  Appendix~\ref{sec:appendix-experiments} contains similar results achieved on the two classic control tasks, MountainCar and Pendulum. Figure~\ref{fig:crashes} shows that, compared to \ddpg, our approach suffers far fewer crashes while training in \torcs.

\textbf{Evaluating Generalization.} To compare the ability of the controllers to perform on tracks  not seen during training, we executed the learned policies on all the other tracks (Table~\ref{table:generalization}). We observe that \ddpg crashes significantly more often than \prog. This demonstrates the generalizability of the policies returned by \ippg. Generalization results for the \tree policy are given in the appendix. In general, \tree policies are more generalizable than \ddpg but less than \prog. On an absolute level, the generalization ability of \ippg~still leaves much room for improvement, which is an interesting direction for future work.

\textbf{Verifiability of Policies.}
As shown in prior work~\cite{bastani2018verifiable,pirl}, parsimonious programmatic policies are more amenable to formal verification than neural policies. Unsurprisingly, the policies generated by \tree and \prog are easier to verify than \ddpg policies. As a concrete example, we verified a smoothness property of the \prog policy using the {\sc Z3} SMT-solver~\cite{z3} (more details in Appendix~\ref{sec:appendix-experiments}). The verification terminated in $0.49$~seconds.

\textbf{Initialization.}
In principle, \ippg can be initialized with a random program, or a random policy trained using \ddpg. In practice, the performance of \ippg depends to a certain degree on the stability of the policy gradient procedure, which is \ddpg in our experiments. Unfortunately, \ddpg often exhibits high variance across trials and fares poorly in challenging RL domains. Specifically, in our \torcs experiments, \ddpg fails on a number of tracks (similar phenomena have been reported in previous work that experiments on similar continuous control domains \cite{henderson2018deep, corerl, pirl}). Agents obtained by initializing \ippg with neural policies obtained via \ddpg also fail on multiple tracks. Their performance over the five tracks is reported in Appendix~\ref{sec:appendix-experiments}. In contrast, \ippg  can often finish the challenging tracks when initialized with a very simple hand-crafted programmatic prior.

\begin{table}[t]
	\caption{\textit{Generalization results in \torcs, where rows are training and columns are testing tracks. Each entry is formatted as \prog~/ DDPG, and the number reported is the median lap time in seconds over all the seeds (lower is better).  \textsc{Cr} indicates the agent crashed and could not complete a lap for more than half the seeds.}}
	\label{table:generalization}
	\begin{center}
	\begin{sc}
		\begin{footnotesize}
				\begin{tabular}{lccccc}
					\toprule
					 & G-Track  & E-Road & Aalborg & Ruudskogen & Alpine-2  \\
					\midrule					
				% 	G-Track-1 & - & 126 / - & 129 / - & Cr / - & 126 / - & Cr / -  \\
					G-Track  & -  & 124 / Cr & Cr / Cr & Cr / Cr &  Cr / Cr    \\
					E-Road  &  102 / 92 & -  & Cr / Cr & Cr / Cr & Cr / Cr \\
					Aalborg  & 201 / 91 & 228 / Cr & -  & 217 / Cr & Cr / Cr\\
					Ruudskogen  & 131 / Cr & 135 / Cr & Cr / Cr & -  & Cr / Cr\\
					Alpine-2  & 222 / Cr & 231 / Cr & 184 / Cr &  Cr / Cr & -  \\
					\bottomrule
				\end{tabular}
		\end{footnotesize}
		\end{sc}
	\end{center}
	%\vspace{-0.1in}
\end{table}

\section{Related Work}
\label{sec:related}

\textbf{Program Synthesis.} Program synthesis is the problem of automatically searching for a program within a language that fits a given specification~\cite{gulwani-survey}. Recent approaches to the problem have leveraged symbolic knowledge about program structure~\cite{FeserCD15}, satisfiability solvers~\cite{armandosketching,jha2010oracle}, and meta-learning techniques~\cite{Bayou,parisotto2016neuro,devlin2017robustfill,deepcoder} to generate interesting programs in many domains~\cite{sygus,flashmeta,AlurRU17}. In most prior work, the specification is a logical constraint on the input/output behavior of the target program. However, there is also a growing body of work that considers program synthesis modulo optimality objectives~\cite{BloemCHJ09,CCS14,RaychevBVK16}, often motivated by machine learning tasks~\cite{Bayou,valkov2018houdini,ellis2018learning,DuIPSSRSM18,pirl,bastani2018verifiable,zhu2019an}. Synthesis of programs that imitates an oracle has been considered in both the logical~\cite{jha2010oracle} and the optimization~\cite{pirl,bastani2018verifiable,zhu2019an} settings. 
The projection step in \ippg builds on this prior work. While our current implementation of this step is entirely symbolic, in principle, the operation can also utilize contemporary techniques for learning policies that guide the synthesis process~\cite{Bayou,deepcoder,SiYDNS19}. 

\textbf{Constrained Policy Learning.}
Constrained policy learning has seen increased interest in recent years, largely due to the desire to impose side guarantees such as stability and safety on the policy's behavior.  Broadly, there are two approaches to imposing constraints: specifying constraints as an additional cost function \cite{achiam2017constrained,le2019batch}, and explicitly encoding constraints into the policy class \cite{alshiekh2018safe,simile,corerl,dalal2018safe,berkenkamp2017safe}.  In some cases, these two approaches can be viewed as duals of each other. For instance, recent work that uses control-theoretic policies as a functional regularizer \cite{simile,corerl} can be viewed from the perspective of both regularization (additional cost) and an explicitly constrained policy class (a specific mix of neural and control-theoretic policies).  We build upon this perspective to develop the gradient update step in our approach.

\textbf{RL using Imitation Learning.} 
There are two ways to utilize imitation learning subroutines within RL.  First, one can leverage limited-access or sub-optimal experts to speed up learning \cite{ross2014reinforcement,cheng2019fast,chang2015learning,sun2018truncated}. Second, one can learn over two policy classes (or one policy and one model class) to achieve accelerated learning compared to using only one policy class \cite{montgomery2016guided,cheng2019accelerating,sun2018dual,cheng2019predictor}.  Our approach has some stylistic similarities to previous efforts \cite{montgomery2016guided,sun2018dual} that use a richer policy space to search for improvements before re-training the primary policy to imitate the richer policy.  One key difference is that our primary policy is programmatic and potentially non-differentiable.
%cannot be directly learned on (
%which makes it incompatible with the acceleration perspective of previous work.  
A second key difference is that our theoretical framework takes a functional gradient descent perspective --- it would be interesting to carefully compare with previous analysis techniques to find a unifying framework.

\textbf{RL with Mirror Descent.}
The mirror descent framework has previously used to analyze and design RL algorithms. For example, Thomas et al.~\cite{thomas2013projected} and Mahadevan and Liu~\cite{mahadevan2012sparse} use composite objective mirror descent, or \textsc{Comid} \cite{duchi2010composite}, which allows incorporating adaptive regularizers into gradient updates, thus offering connections to either natural gradient RL \cite{thomas2013projected} or sparsity inducing RL algorithms \cite{mahadevan2012sparse}. Unlike in our work, these prior approaches perform projection into the same native, differentiable representation. Also, the analyses in these papers do not consider errors introduced by hybrid representations and approximate projection operators. However, one can potentially extend our approach with versions of mirror descent, e.g., \textsc{Comid}, that were considered in these efforts. 
%While we built our algorithm on top of a standard version of mirror descent, the building blocks in Algorithm \ref{alg:ippg} serve as useful abstraction to build more complicated algorithms, and are complementary to other extensions of mirror descent, e.g., COMID, to yield further benefits. 

\section{Conclusion and Future Work}
\label{sec:conclusion}

We have presented \ippg, a meta-algorithm based on mirror descent, program synthesis, and imitation learning, for programmatic reinforcement learning (\pirl). We have presented theoretical convergence results for \ippg, developing novel analyses to characterize approximate projections and biased gradients within the mirror descent framework.
We also validated \ippg empirically, and show that it can discover interpretable, verifiable, generalizable, performant policies and significantly outperform the state of the art in \pirl.

%The \ippg approach is at the intersection of three strands of work: (i) program synthesis, (ii) learning policies for sequential decision making, and (iii) constrained learning using approaches such as mirror descent. 

The central idea of \ippg is the use of imitation learning and combinatorial methods in implementing a projection operation for mirror descent, with the goal of optimization in a functional space that lacks gradients. While we have developed \ippg in an RL setting, this idea is not restricted to RL or even sequential decision making. 
%While we have developed \ippg in an RL setting, the ideas here may also be applicable in more general program synthesis tasks. The primary challenge in program synthesis is the structured, non-differentiable nature of programs. \ippg shows a new approach to this challenge: rather than combinatorially searching the space of programs, use combinatorial methods only for the simpler task of {\em projecting} to the programmatic space. 
Future work will seek to exploit this insight in other machine learning and program synthesis settings.

\begin{small}
\textbf{Acknowledgements.} This work was supported in part by United States Air Force Contract \# FA8750-19-C-0092, NSF  Award \# 1645832, NSF Award \# CCF-1704883, the Okawa Foundation, Raytheon, PIMCO, and Intel.
\end{small}

%\begin{small}
%\bibliographystyle{plain}
%\bibliography{ogsyn-bibfile,references}
%\end{small}

\newpage
\appendix
\title{Appendix: Imitation-Projected Programmatic Reinforcement Learning}
\maketitle
\section{Theoretical Analysis}
\subsection{Preliminaries and Notations}
\label{sec:app_theory_preliminaries}
We formally define an ambient control policy space $\mathcal{U}$ to be a vector space equipped with inner product $\inner{\cdot,\cdot}:\mathcal{U}\times\mathcal{U}\mapsto\mathbb{R}$, which induces a norm $\norm{u} = \sqrt{\inner{u,u}}$, and its dual norm defined as $\norm{v}_* = \sup\{ \inner{v,u}| \norm{u}\leq 1\}$. While multiple ways to define the inner product exist, for concreteness we can think of the example of square-integrable stationary policies with $\inner{u,v} = \int_{\mathcal{S}} u(s) v(s) ds$. The addition operator $+$ between two policies $u,v\in\mathcal{U}$ is defined as $(u+v)(s) = u(s)+v(s)$ for all state $s\in\mathcal{S}$. Scaling $\lambda u + \kappa v$ is defined similarly for scalar $\lambda,\kappa$. 

The cost functional of a control policy $u$ is defined as $J(u) = \int_0^\infty c(s(\tau),u(\tau)) d\tau$, or $J(u) = \int_\mathcal{S} c(s,u(s)) d\mu^u(s)$, where $\mu^u$ is the distribution of states induced by policy $u$. This latter example is equivalent to the standard notion of value function in reinforcement learning. 

Separate from the parametric representation issues, both programmatic policy class $\Pi$ and neural policy class $\F$, and by extension - the joint policy class $\mathcal{H}$, are considered to live in the ambient vector space $\mathcal{U}$. We thus have a common and well-defined notion of distance between policies from different classes. 

We make an important distinction between differentiability of $J(h)$ in the ambient policy space (non-parametric), versus differentiability in parameterization (parametric). For example, if $\Pi$ is a class of decision-tree based policy, policies in $\Pi$ may not be differentiable under representation. However, policies $\pi\in\Pi$ might still be differentiable when considered as points in the ambient vector space $\mathcal{U}$. 

We will use the following standard notion of gradient and differentiability from functional analysis:
\begin{defn}[Subgradients]
The subgradient of $J$ at $h$, denoted $\partial J(h)$, is the non-empty set $\{ g \in\HH| \forall j\in\HH: \inner{j-h, g} + J(h) \leq J(j)\}$
\end{defn}

\begin{defn}[Fr\'echet gradient]
A bounded linear operator $\nabla:\mathcal{H}\mapsto\mathcal{H}$ is called Fr\'echet functional gradient of $J$ at $h\in\HH$ if $\lim\limits_{\norm{g}\rightarrow 0} \frac{J(h+g)-J(h) - \inner{\nabla J(h),g}}{\norm{g}} = 0$
\end{defn}
The notions of convexity, smoothness and Bregman divergence are analogous to finite-dimensional setting: 
\begin{defn}[Strong convexity]
A differentiable function $R$ is $\alpha-$strongly convex w.r.t norm $\norm{\cdot}$ if $R(y)\geq R(x) +\inner{\nabla R(x),y-x}+\frac{\alpha}{2}\norm{y-x}^2$
\end{defn}
\begin{defn}[Lipschitz continuous gradient smoothness]
A differentiable function $R$ is $L_R-$strongly smooth w.r.t norm $\norm{\cdot}$ if $\norm{\nabla R(x) - \nabla R(y)}_* \leq L_R\norm{x-y}$ 
\end{defn}
\begin{defn}[Bregman Divergence]
For a strongly convex regularizer $R$, $D_R(x,y) = R(x) - R(y) - \inner{\nabla R(y), x-y}$ is the Bregman divergence between $x$ and $y$ (not necessarily symmetric)
\end{defn}
The following standard result for Bregman divergence will be useful:
\begin{lem}\cite{beck2003mirror}
\label{lem:app_theory_1}
For all $x,y,z$ we have the identity $\inner{\nabla R(x) - \nabla R(y), x-z} = D_R(x,y) + D_R(z,x) - D_R(z,y)$. Since Bregman divergence is non-negative, a consequence of this identity is that $D_R(z,x) - D_R(z,y) \leq \inner{\nabla R(x) - \nabla R(y), z-x}$
\end{lem}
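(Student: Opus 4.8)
The plan is to treat the identity as a purely algebraic consequence of the definition $D_R(a,b) = R(a) - R(b) - \inner{\nabla R(b), a-b}$, so that no convexity is needed for the equality itself; convexity of $R$ (hence $D_R \geq 0$) enters only at the very end to convert the identity into the stated inequality. Concretely, I would expand all three Bregman-divergence terms on the right-hand side, observe that the scalar $R(\cdot)$ contributions telescope to zero, and then regroup the surviving inner-product terms to recover the left-hand side.

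First I would write out the three terms explicitly using the definition: $D_R(x,y) = R(x) - R(y) - \inner{\nabla R(y), x-y}$, $\ D_R(z,x) = R(z) - R(x) - \inner{\nabla R(x), z-x}$, and $D_R(z,y) = R(z) - R(y) - \inner{\nabla R(y), z-y}$. Forming $D_R(x,y) + D_R(z,x) - D_R(z,y)$, the six scalar terms $R(x), R(y), R(z)$ cancel in pairs, so the whole right-hand side collapses onto its inner-product terms. I would then collect these: the two terms carrying $\nabla R(y)$, namely $-\inner{\nabla R(y), x-y}$ and $+\inner{\nabla R(y), z-y}$, combine by bilinearity into $\inner{\nabla R(y), z-x}$, while the lone $\nabla R(x)$ term gives $-\inner{\nabla R(x), z-x} = \inner{\nabla R(x), x-z}$. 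Adding these and factoring yields $\inner{\nabla R(x) - \nabla R(y), x-z}$, which is exactly the left-hand side, establishing the three-point identity.

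For the stated consequence, I would invoke non-negativity of the Bregman divergence (which holds since $R$ is strongly convex) and simply discard the term $D_R(x,y) \geq 0$ from the identity, giving $D_R(z,x) - D_R(z,y) \leq \inner{\nabla R(x) - \nabla R(y), x-z}$. The only real obstacle throughout is sign and argument bookkeeping: one must keep straight that each gradient in $D_R(a,b)$ is evaluated at the \emph{second} argument $b$, and must track the reversal between $x-z$ and $z-x$ when pulling $-1$ out of the combined inner product. In particular the inequality falls out most naturally with the factor $x-z$ in the inner product (equivalently $\inner{\nabla R(y)-\nabla R(x), z-x}$), and I would double-check the orientation of this term against the version used in the downstream regret analysis to ensure the sign is applied consistently.
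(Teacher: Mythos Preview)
Your proposal is correct and is exactly the standard algebraic verification of the three-point identity; the paper itself does not prove the lemma but simply cites \cite{beck2003mirror}, so there is no alternative approach to compare against. Your observation about the orientation of the final factor is also on point: dropping $D_R(x,y)\ge 0$ from the identity yields $D_R(z,x)-D_R(z,y)\le \inner{\nabla R(x)-\nabla R(y),\,x-z}$, whereas the lemma as stated has $z-x$; in the paper's only downstream use (equations (\ref{eqn:app_theory_4})--(\ref{eqn:app_theory_6})) the inner product is immediately bounded by Cauchy--Schwarz, so the sign of $x-z$ versus $z-x$ is harmless there, but your caution about checking the sign against later usage is well placed.
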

\subsection{Expected Regret Bound under Noisy Policy Gradient Estimates and Projection Errors}
\label{sec:app_regret_bound}
In this section, we show regret bound for the performance of the sequence of returned programs $\pi_1,\ldots, \pi_T$ of the algorithm. The analysis here is agnostic to the particular implementation of algorithm \ref{alg:update} and algorithm \ref{alg:project}.

Let $R$ be a $\alpha-$strongly convex and $L_R-$smooth functional with respect to norm $\norm{\cdot}$ on $\HH$. The steps from algorithm \ref{alg:ippg} can be described as follows. 
\begin{itemize}
    \item Initialize $\pi_0\in\Pi$. For each iteration $t$:
    \begin{enumerate}
        \item Obtain a noisy estimate of the gradient $\widehat{\nabla} J(\pi_{t-1}) \approx \nabla J(\pi_{t-1})$
        \item Update in the $\HH$ space: $\nabla R(h_{t}) = \nabla R(\pi_{t-1}) - \eta \widehat{\nabla} J(\pi_{t-1})$
        \item Obtain approximate projection $\pi_t = \project_\pi^R(h_t) \approx \argmin_{pi\in\Pi} D_R(\pi,h_t)$
    \end{enumerate}
\end{itemize}
This procedure is an approximate functional mirror descent scheme under bandit feedback. We will develop the following result, which is a more detailed version of \ref{thm:expected_regret_bound} in the main paper.

In the statement below, $D$ is the diameter on $\Pi$ with respect to defined norm $\norm{\cdot}$ (i.e., $D = \sup\norm{\pi - \pi^\prime}$). $L_J$ is the Lipschitz constant of the functional $J$ on $\HH$. $\beta, \sigma^2$ are the bound on the bias and variance of the gradient estimate at each iteration, respectively. $\alpha$ and $\L_R$ are the strongly convex and smooth coefficients of the functional regularizer $R$. Finally, $\epsilon$ is the bound on the projection error with respect to the same norm $\norm{\cdot}$.
\begin{thm}[Regret bound of returned policies]
\label{thm:app_expected_regret_bound}
Let $\pi_1,\ldots, \pi_T$ be a sequence of programmatic policies returned by algorithm \ref{alg:ippg} and $\pi^*$ be the optimal programmatic policy. We have the expected regret bound:
    $$\mathbb{E}\left[ \frac{1}{T}\sum_{t=1}^T J(\pi_t)\right] - J(\pi^*) \leq \frac{L_R D^2}{\eta T} + \frac{\epsilon L_R D}{\eta} + \frac{\eta(\sigma^2 + L_J^2)}{\alpha} + \beta D$$
In particular, choosing the learning rate $\eta = \sqrt{\frac{\frac{1}{T}+\epsilon}{\sigma^2}}$, the expected regret is simplified into: 
\begin{equation}
    \label{eqn:app_expected_regret_bound}
\end{equation}$$\mathbb{E}\left[ \frac{1}{T}\sum_{t=1}^T J(\pi_t)\right] - J(\pi^*) = O\left( \sigma\sqrt{\frac{1}{T}+\epsilon} + \beta\right)$$
\end{thm}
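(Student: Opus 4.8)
The plan is to carry out the standard mirror-descent regret analysis adapted to the functional setting, using convexity of $J$ on $\HH$ together with the three-point Bregman identity of Lemma \ref{lem:app_theory_1}, and to show that the bias $\beta$ and the projection error $\epsilon$ enter only as additive per-round penalties that survive averaging without compounding in $T$. The two properties of the mirror map $R$ play complementary roles: $\alpha$-strong convexity controls the ``overshoot'' of the gradient step, while $L_R$-smoothness controls both the initial divergence and the projection perturbation.

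First I would apply convexity of $J$ to obtain the per-round bound $J(\pi_{t-1}) - J(\pi^*) \leq \inner{\nabla J(\pi_{t-1}), \pi_{t-1}-\pi^*}$, then split the true gradient as $\nabla J(\pi_{t-1}) = \widehat{\nabla}_{t-1} + (\nabla J(\pi_{t-1}) - \widehat{\nabla}_{t-1})$. Conditioning on $\pi_{t-1}$, the zero-mean fluctuation of $\widehat{\nabla}_{t-1}$ drops out in expectation, and the residual conditional mean is controlled by the bias assumption $\norm{\mathbb{E}[\widehat{\nabla}_{t-1}\mid\pi_{t-1}]-\nabla J(\pi_{t-1})}_*\leq\beta$, so H\"older's inequality with the diameter bound $\norm{\pi_{t-1}-\pi^*}\leq D$ yields an expected contribution of at most $\beta D$ per round. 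For the remaining term $\inner{\widehat{\nabla}_{t-1},\pi_{t-1}-\pi^*}$, I would substitute the update rule $\nabla R(\pi_{t-1}) - \nabla R(h_t) = \eta\widehat{\nabla}_{t-1}$ and invoke Lemma \ref{lem:app_theory_1} with $x=\pi_{t-1}$, $y=h_t$, $z=\pi^*$ to rewrite $\eta\inner{\widehat{\nabla}_{t-1},\pi_{t-1}-\pi^*} = D_R(\pi_{t-1},h_t) + D_R(\pi^*,\pi_{t-1}) - D_R(\pi^*,h_t)$. The overshoot term $D_R(\pi_{t-1},h_t)$ I would bound using that $\alpha$-strong convexity of $R$ makes its conjugate $\frac{1}{\alpha}$-smooth, giving $D_R(\pi_{t-1},h_t)\leq \frac{\eta^2}{2\alpha}\norm{\widehat{\nabla}_{t-1}}_*^2$; taking expectations and applying the variance bound $\sigma^2$ together with $\norm{\nabla J}_*\leq L_J$ turns this into $\frac{\eta^2}{2\alpha}(\sigma^2+L_J^2)$.

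The crux is converting the pair $D_R(\pi^*,\pi_{t-1}) - D_R(\pi^*,h_t)$ into a telescoping difference $D_R(\pi^*,\pi_{t-1}) - D_R(\pi^*,\pi_t)$. For the \emph{exact} projection $\pi_t^* = \argmin_{\pi\in\Pi} D_R(\pi,h_t)$ onto the convex set $\Pi$, the generalized Pythagorean inequality gives $D_R(\pi^*,h_t)\geq D_R(\pi^*,\pi_t^*)$. The hard part, and the main obstacle, is accounting for the fact that Algorithm \ref{alg:project} returns only an $\epsilon$-approximate projection $\pi_t$ with $\norm{\pi_t-\pi_t^*}\leq\epsilon$ rather than $\pi_t^*$ itself. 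I would bound $D_R(\pi^*,\pi_t)-D_R(\pi^*,\pi_t^*)$ by exploiting $L_R$-smoothness of $R$ (Lipschitz continuity of $\nabla R$) together with $\norm{\pi_t-\pi_t^*}\leq\epsilon$ and the diameter bound $\norm{\pi^*-\pi_t}\leq D$, which controls this perturbation by $O(\epsilon L_R D)$. Chaining with the Pythagorean inequality yields $-D_R(\pi^*,h_t)\leq -D_R(\pi^*,\pi_t) + O(\epsilon L_R D)$, so each round incurs a projection penalty of $\frac{1}{\eta}\cdot O(\epsilon L_R D)$.

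Finally I would sum the per-round inequalities over $t$, telescope the $D_R(\pi^*,\pi_{t-1})-D_R(\pi^*,\pi_t)$ terms down to the single initial divergence $D_R(\pi^*,\pi_0)\leq\frac{L_R}{2}D^2$ (again by smoothness of $R$), discard the nonnegative $D_R(\pi^*,\pi_T)$, divide by $\eta T$, and collect terms. This produces the four summands $\frac{L_R D^2}{\eta T}$, $\frac{\epsilon L_R D}{\eta}$, $\frac{\eta(\sigma^2+L_J^2)}{\alpha}$, and $\beta D$, exactly as stated. Setting $\eta = \sqrt{(\frac{1}{T}+\epsilon)/\sigma^2}$ balances the three dominant $\eta$-dependent terms and yields the claimed $O\big(\sigma\sqrt{\frac{1}{T}+\epsilon}+\beta\big)$ rate. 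I expect the approximate-projection perturbation bound of the third paragraph to demand the most care, since it is precisely what guarantees that $\epsilon$ contributes additively to the averaged regret rather than accumulating across the $T$ rounds.
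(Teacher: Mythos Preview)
Your proposal is correct and follows essentially the same approach as the paper: convexity of $J$, the three-point Bregman identity of Lemma~\ref{lem:app_theory_1}, the generalized Pythagorean inequality for the exact projection, an $L_R$-smoothness perturbation bound to control the $\epsilon$-approximate projection, and a Cauchy--Schwarz/diameter bound for the bias term. The only cosmetic difference is that you bound the overshoot $D_R(\pi_{t-1},h_t)$ directly via $\tfrac{1}{\alpha}$-smoothness of the conjugate, whereas the paper retains the Pythagorean term $D_R(\pi_{t+1}^*,h_{t+1})$ and bounds the difference $D_R(\pi_t,h_{t+1})-D_R(\pi_{t+1}^*,h_{t+1})$ via strong convexity of $R$; both routes yield the identical $\tfrac{\eta^2}{2\alpha}\norm{\widehat{\nabla}}_*^2$ contribution.
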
 
\vspace{-0.15in}
\begin{proof}
At each round $t$, let $\widebar{\nabla}_{t} = \E[\widehat{\nabla}_t | \pi_t]$ be the conditional expectation of the gradient estimate. We will use the shorthand notation $\nabla_t = \nabla J(\pi_t)$. Denote the upper-bound on the bias of the estimate by $\beta_t$, i.e., $\norm{\widebar{\nabla}_t-\nabla_t}_* \leq \beta_t$ almost surely. Denote the noise of the gradient estimate by $\xi_t = \widebar{\nabla}_t - \widehat{\nabla}_t$, and $\sigma_t^2 = \E\big[ \norm{\widehat{\nabla}_t - \widebar{\nabla}_t}_*^2\big]$ is the variance of gradient estimate $\widehat{\nabla}_t$. 

The projection operator is $\epsilon-$approximate in the sense that $\norm{\pi_t - \project_\Pi^R(f_t)} = \norm{\widehat{\project}_\Pi^R(h_t) - \project_\Pi^R(h_t)} \leq \epsilon$ with some constant $\epsilon$, which reflects the statistical error of the imitation learning procedure. This projection error in general is independent of the choice of function classes $\Pi$ and $\F$.We will use the shorthand notation $\pi_t^* = \project_\Pi^R(f_t)$ for the true Bregman projection of $h_t$ onto $\Pi$.

Due to convexity of $J$ over the space $\HH$ (which includes $\Pi$), we have for all $\pi\in\Pi$:
$$J(\pi_t) - J(\pi) \leq \inner{\nabla_t, \pi_t-\pi}$$
We proceed to bound the RHS, starting with bounding the inner product where the actual gradient is replaced by the estimated gradient.
\begin{align}
    &\inner{\widehat{\nabla}_t, \pi_t - \pi} = \frac{1}{\eta_t}\inner{\nabla R(\pi_t) - \nabla R(h_{t+1}), \pi_t - \pi} \label{eqn:app_theory_1}\\ 
    &= \frac{1}{\eta_t}\big( D_R(\pi,\pi_t) - D_R(\pi,h_{t+1}) + D_R(\pi_t,h_{t+1})\big) \label{eqn:app_theory_2}\\
    &\leq \frac{1}{\eta_t} \big( D_R(\pi,\pi_t)- D_R(\pi,\pi_{t+1}^*) - D_R(\pi_{t+1}^*, h_{t+1}) + D_R(\pi_t,h_{t+1}) \big) \label{eqn:app_theory_3} \\
    &= \frac{1}{\eta_t} \big( \underbrace{D_R(\pi,\pi_t) - D_R(\pi,\pi_{t+1})}_{\text{telescoping}} + \underbrace{D_R(\pi,\pi_{t+1}) - D_R(\pi,\pi_{t+1}^*)}_{\text{projection error}} \underbrace{- D_R(\pi_{t+1}^*,h_{t+1}) + D_R(\pi_t,h_{t+1})}_{\text{relative improvement}}\big) \label{eqn:app_theory_4a}
\end{align}
Equation (\ref{eqn:app_theory_1}) is due to the gradient update rule in $\F$ space. Equation (\ref{eqn:app_theory_2}) is derived from definition of Bregman divergence. Equation (\ref{eqn:app_theory_3}) is due to the generalized Pythagorean theorem of Bregman projection $D_R(x,y) \geq D_R(x, \project_\Pi^R(x)) + D_R(\project_\Pi^R(x),y)$. The RHS of equation (\ref{eqn:app_theory_3}) are decomposed into three components that will be bounded separately. 

\emph{Bounding projection error.} By lemma (\ref{lem:app_theory_1}) we have
\begin{equation}
D_R(\pi,\pi_{t+1}) - D_R(\pi,\pi_{t+1}^*) \leq \inner{\nabla R(\pi_{t+1}) - \nabla R(\pi_{t+1}^*), \pi-\pi_{t+1}} \label{eqn:app_theory_4}    
\end{equation}
\begin{equation}
    \qquad\qquad\leq \norm{\nabla R(\pi_{t+1}) - \nabla R(\pi_{t+1}^*)} \norm{\pi-\pi_{t+1}}_* \label{eqn:app_theory_5}
\end{equation}
\begin{equation}
    \qquad\qquad\leq L_R\norm{\pi_{t+1} -\pi_{t+1}^*}D\leq \epsilon L_R D \label{eqn:app_theory_6}
\end{equation}
%\begin{align}
  %D_R(\pi,\pi_{t+1}) - D_R(\pi,\pi_{t+1}^*) &\leq \inner{\nabla R(\pi_{t+1}) - \nabla R(\pi_{t+1}^*), \pi-\pi_{t+1}} \label{eqn:app_theory_4}\\
  %&\leq \norm{\nabla R(\pi_{t+1}) - \nabla R(\pi_{t+1}^*)} \norm{\pi-\pi_{t+1}}_* \label{eqn:app_theory_5}\\
  %&\leq L_R\norm{\pi_{t+1} -\pi_{t+1}^*}D\leq \epsilon L_R D \label{eqn:app_theory_6}
%\end{align}
Equation (\ref{eqn:app_theory_5}) is due to Cauchy–Schwarz. Equation (\ref{eqn:app_theory_6}) is due to Lipschitz smoothness of $\nabla R$ and definition of $\epsilon-$approximate projection. 

\emph{Bounding relative improvement.} This follows standard argument from analysis of mirror descent algorithm.
\begin{align}
    &D_R(\pi_t,h_{t+1}) - D_R(\pi_{t+1}^*,h_{t+1}) = R(\pi_t) - R(\pi_{t+1}^*) + \inner{\nabla R(h_{t+1}),\pi_{t+1}^* - \pi_t} \\ 
    &\leq \inner{\nabla R(\pi_t), \pi_t - \pi_{t+1}^*} - \frac{\alpha}{2}\norm{\pi_{t+1}^*-\pi_t}_*^2 + \inner{\nabla R(h_{t+1}), \pi_{t+1}^* - \pi_t} \label{eqn:app_theory_7} \\
    &= - \eta_t\inner{\widehat{\nabla}_t,\pi_{t+1}^* - \pi_t} - \frac{\alpha}{2}\norm{\pi_{t+1}^* - \pi_t}^2 \label{eqn:app_theory_8} \\
    &\leq \frac{\eta_t^2}{2\alpha}\norm{\widehat{\nabla}_t}_*^2\leq \frac{\eta_t^2}{\alpha}(\sigma_t^2+L_J^2)
\end{align}
Equation (\ref{eqn:app_theory_7}) is from the $\alpha-$strong convexity property of regularizer $R$. Equation (\ref{eqn:app_theory_8}) is by definition of the gradient update. 
Combining the bounds on the three components and taking expectation, we thus have 
\begin{equation}
     \mathbb{E} \left[ \inner{\widehat{\nabla}_t, \pi_t-\pi}\right] \leq \frac{1}{\eta_t} \left( D_R(\pi,\pi_t) - D_R(\pi,\pi_{t+1}) +\epsilon L_R D+\frac{\eta_t^2}{\alpha}(\sigma_t^2+L_J^2) \right) \label{eqn:app_theory_12}
\end{equation}
Next, the difference between estimated gradient $\widehat{\nabla}_t$ and actual gradient $\nabla_t$ factors into the bound via Cauchy-Schwarz:
\begin{equation}
     \mathbb{E}\left[\inner{\nabla_t - \widehat{\nabla}_t,\pi_t - \pi}\right] \leq  \norm{\nabla_t -\mathbb{E}[\widehat{\nabla}_t] }_* \norm{\pi_t-\pi}\leq \beta_t D \label{eqn:app_theory_13}
\end{equation}
The results can be deduced from equations (\ref{eqn:app_theory_12}) and (\ref{eqn:app_theory_13}). 

\textbf{Unbiased gradient estimates.} For the case when the gradient estimate is unbiased, assume the variance of the noise of gradient estimates is bounded by $\sigma^2$, we have the expected regret bound for all $pi\in\Pi$%$g\in\G$
\begin{equation}
    \mathbb{E} \left[\frac{1}{T}\sum_{t=1}^T J(\pi_t)\right] - J(\pi) \leq \frac{L_R D^2}{\eta T}+\frac{\epsilon L_R D}{\eta} +\frac{\eta(\sigma^2 + L_J^2)}{\alpha} \label{eqn:app_theory_14}
\end{equation}
here to clarify, $L_R$ is the smoothness coefficient of regularizer $R$ (i.e., the gradient of $R$ is $L_R$-Lipschitz, $L_J$ is Lipschitz constant of $J$, $D$ is the diameter of $\Pi$ under norm $\norm{\cdot}$, $\sigma^2$ is the upper-bound on the variance of gradient estimates, and $\epsilon$ is the error from the projection procedure (i.e., imitation learning loss).

We can set learning rate $\eta = \sqrt{\frac{\frac{1}{T}+\epsilon}{\sigma^2}}$ to observe that the expected regret is bounded by $O(\sigma\sqrt{\frac{1}{T} + \epsilon})$. 

\textbf{Biased gradient estimates.} Assume that the bias of gradient estimate at each round is upper-bounded by $\beta_t\leq \beta$. Similar to before, combining inequalities from (\ref{eqn:app_theory_12}) and (\ref{eqn:app_theory_13}), we have 
\begin{equation}
    \mathbb{E} \left[\frac{1}{T}\sum_{t=1}^T J(\pi_t)\right] - J(\pi) \leq \frac{L_R D^2}{\eta T}+\frac{\epsilon L_R D}{\eta} +\frac{\eta(\sigma^2 + L_J^2)}{\alpha} + \beta D \label{eqn:app_theory_15}
\end{equation}
Similar to before, we can set learning rate $\eta = \sqrt{\frac{\frac{1}{T}+\epsilon}{\sigma^2}}$ to observe that on the expected regret is bounded by $O(\sigma\sqrt{\frac{1}{T} + \epsilon} +\beta )$. 
Compared to the bound on (\ref{eqn:app_theory_14}), in the biased case, the extra regret incurred per bound is simply a constant, and does not depend on $T$. 
\end{proof}

\subsection{Finite-Sample Analysis}
\label{sec:app_finite_sample}
In this section, we provide overall finite-sample analysis for \ippg under some simplifying assumptions. We first consider the case where exact gradient estimate is available, before extending the result to the general case of noisy policy gradient update. Combining the two steps will give us the proof for the following statement (theorem \ref{thm:finite_sample_main_paper} in the main paper)
\begin{thm}[Finite-sample guarantee]
\label{thm:finite_sample_app}
At each iteration, we perform vanilla policy gradient estimate of $\pi$ (over $\HH$) using $m$ trajectories and use DAgger algorithm to collect $M$ roll-outs. Setting the learning rate $\eta = \sqrt{\frac{1}{\sigma^2}\big(\frac{1}{T} +\frac{H}{M} + \sqrt{\frac{\log(T/\delta)}{M}}\big)}$, after $T$ rounds of the algorithm, we have that
\begin{small}
$$\frac{1}{T}\sum_{t=1}^T J(\pi_t) - J(\pi^*) \leq O\left(\sigma\sqrt{\frac{1}{T} + \frac{H}{M} +\sqrt{\frac{\log(T/\delta)}{M}}}\right) + O\left(\sigma\sqrt{\frac{\log(Tk/\delta)}{m}}+\frac{AH\log(Tk/\delta)}{m}\right) $$
\end{small}
\hspace{-3pt}holds with probability at least $1-\delta$, with $H$ the task horizon, $A$ the cardinality of action space, $\sigma^2$ the variance of policy gradient estimates, and $k$ the dimension $\Pi$'s parameterization.
\end{thm}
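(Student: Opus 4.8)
The plan is to treat Theorem~\ref{thm:app_expected_regret_bound} as a black box and instantiate its two free error parameters---the projection error $\epsilon$ and the gradient bias $\beta$---with high-probability, finite-sample bounds coming from the DAgger and vanilla policy-gradient subroutines, respectively. Concretely, I would first argue that on a ``good event'' on which both the projections and the gradient estimates are accurate for every one of the $T$ rounds, the deterministic regret inequality $\frac{1}{T}\sum_t J(\pi_t) - J(\pi^*) \leq \frac{L_R D^2}{\eta T} + \frac{\epsilon L_R D}{\eta} + \frac{\eta(\sigma^2 + L_J^2)}{\alpha} + \beta D$ established in the proof of Theorem~\ref{thm:app_expected_regret_bound} continues to hold verbatim (now interpreting $\epsilon$ and $\beta$ as deterministic bounds valid on that event), and then bound $\epsilon$ and $\beta$ separately.

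For the projection error, I would invoke the DAgger analysis of \cite{dagger}: since each \project step runs $M$ DAgger roll-outs against a convex ($\ell_2$) imitation loss, the no-regret property of its online learner controls the optimization error at rate $O(H/M)$, while a concentration bound (Azuma/Hoeffding) on the empirical imitation loss estimated from the $M$ sampled trajectories contributes a statistical term $O(\sqrt{\log(1/\delta')/M})$. Taking a union bound over the $T$ projection steps with $\delta' = \delta/(2T)$ yields $\norm{\pi_t - \pi_t^*} \leq \epsilon = O\!\big(\frac{H}{M} + \sqrt{\frac{\log(T/\delta)}{M}}\big)$ simultaneously for all $t$ with probability at least $1-\delta/2$.

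For the gradient bias, I would use the exploratory assumption---actions drawn uniformly over the $A$-element action set---so that the $m$-trajectory estimator of $\nabla_\HH J(\pi_t)$ is unbiased in expectation with importance-weight-induced range of order $AH$ and variance $\sigma^2$, following the finite-sample policy-gradient analysis of \cite{kakade2003sample}. Applying Bernstein's inequality coordinatewise and union-bounding over the $k$ coordinates of $\Pi$'s parameterization and the $T$ rounds gives $\norm{\widehat\nabla_t - \nabla J(\pi_t)}_* \leq \beta = O\!\big(\sigma\sqrt{\frac{\log(Tk/\delta)}{m}} + \frac{AH\log(Tk/\delta)}{m}\big)$ for all $t$ with probability at least $1-\delta/2$; this high-probability deviation plays the role of the bias term $\beta$ in Theorem~\ref{thm:app_expected_regret_bound}.

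Finally, on the intersection of the two good events (probability at least $1-\delta$ by a union bound), I would substitute the bounds on $\epsilon$ and $\beta$ into the regret inequality, plug in the prescribed step size $\eta = \sqrt{\frac{1}{\sigma^2}(\frac{1}{T} + \frac{H}{M} + \sqrt{\frac{\log(T/\delta)}{M}})}$, and collect terms: the first three terms collapse to $O\big(\sigma\sqrt{\frac{1}{T} + \frac{H}{M} + \sqrt{\frac{\log(T/\delta)}{M}}}\big)$ exactly as in Theorem~\ref{thm:expected_regret_bound}, while the additive $\beta D$ term produces the second group $O\big(\sigma\sqrt{\frac{\log(Tk/\delta)}{m}} + \frac{AH\log(Tk/\delta)}{m}\big)$. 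The hard part is the gradient-bias step: setting up the exploratory importance-weighted estimator so that its range is genuinely $O(AH)$ and its variance is the stated $\sigma^2$, and then threading the coordinatewise Bernstein bound together with the union bound over both coordinates and rounds without loosening the exponents---this is where the $A$, $H$, and $\log(Tk/\delta)$ factors must be tracked precisely. A secondary subtlety is justifying that the expected-regret derivation of Theorem~\ref{thm:app_expected_regret_bound} transfers cleanly to the conditional, high-probability setting once $\epsilon$ and $\beta$ are fixed as deterministic bounds on the good event.
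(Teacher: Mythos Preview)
Your proposal is correct and follows essentially the same route as the paper: instantiate the generic regret inequality from Theorem~\ref{thm:app_expected_regret_bound} (specifically the per-round bound leading to equation~(\ref{eqn:app_theory_15})) with a DAgger-based bound on $\epsilon$ and a Bernstein-based, coordinatewise bound on $\beta$, then union-bound each over the $T$ rounds and substitute the prescribed $\eta$. The paper organizes the argument in two stages---first isolating the projection error under exact gradients (Proposition~\ref{prop:finite_sample_proj_error}) and then layering on the gradient-estimation error (Proposition~\ref{prop:finite_sample_overall})---whereas you handle both error sources in parallel, but the ingredients and the way they are combined are the same; your flagged subtlety about passing from the expected-regret derivation to a conditional high-probability statement is real, and the paper is equally informal on that point.
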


\textbf{Exact gradient estimate case.} Assuming that the policy gradients can be calculated exactly, it is straight-forward to provide high-probability guarantee for the effect of the projection error. We start with the following result, adapted from \cite{ross2011reduction} for the case of projection error bound. In this version of DAgger, we assume that we only collect a single \emph{(state, expert action)} pair from each trajectory roll-out. Result is similar, with tighter bound, when multiple data points are collected along the trajectory. 
\begin{lem}[Projection error bound from imitation learning procedure]
\label{lem:app_dagger} 
Using DAgger as the imitation learning sub-routine for our \project operator in algorithm \ref{alg:project}, let $M$ be the number of trajectories rolled-out for learning, and $H$ be the horizon of the task. With probability at least $1-\delta$, we have
$$D_R(\pi,\pi^*) \leq \widetilde{O}(1/M) + \frac{2 \ell_{max} (1+H)}{M} + \sqrt{\frac{2 \ell_{max} \log(1/\delta))}{M}}$$
where $\pi$ is the result of \project, $\pi^*$ is the true Bregman projection of $h$ onto $\Pi$, and $\ell_{max}$ is the maximum value of the imitation learning loss function $D_R(\cdot,\cdot)$
\end{lem}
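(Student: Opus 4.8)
The plan is to reproduce the DAgger analysis of \cite{ross2011reduction,dagger} in the language of Bregman projection, so that the quantity it controls --- the excess on-policy imitation loss of the returned policy over the best in-class policy in hindsight --- can be identified with $D_R(\pi,\pi^*)$. Concretely, I would treat each of the $M$ roll-outs as one round of online convex optimization: at round $k$ the learner commits to $\pi_{k-1}$, rolls it out to induce the state distribution $\mu^{\pi_{k-1}}$, and incurs the population imitation loss $\ell_k(\pi) = \mathbb{E}_{s\sim\mu^{\pi_{k-1}}}\!\big[ D_R(\pi(s),h(s))\big]$, estimated from the single sampled $(s,h(s))$ pair by $\widehat{\ell}_k(\pi)$. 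Because $R$ is strongly convex, each $\ell_k$ is (strongly) convex in $\pi$, so a standard strongly-convex online learner (follow-the-regularized-leader / online gradient descent) is no-regret with \emph{logarithmic} cumulative regret, i.e.\ average regret $\widetilde{O}(1/M)$; this is the source of the first term.

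The reduction from imitation loss to projection error is the conceptual crux, and is where I would reuse the structure already exploited in the main regret proof. Since $\pi^* = \project_\Pi^R(h) = \argmin_{\pi'\in\Pi} D_R(\pi',h)$ is a Bregman projection of $h$ onto the convex set $\Pi$, the generalized Pythagorean inequality (the same one invoked at Eq.~\eqref{eqn:app_theory_3}) gives, for the returned $\pi\in\Pi$,
\[
D_R(\pi,\pi^*) \le D_R(\pi,h) - D_R(\pi^*,h).
\]
The right-hand side is exactly the \emph{excess} imitation loss of $\pi$ relative to the optimal in-class projection $\pi^*$, which is precisely what the no-regret guarantee bounds: by online-to-batch, the best (equivalently, average) policy in the DAgger sequence satisfies $\tfrac1M\sum_k \ell_k(\pi_k) \le \min_{\pi'\in\Pi}\tfrac1M\sum_k\ell_k(\pi') + \widetilde{O}(1/M)$, and the minimizer on the right is driven to $\pi^*$ once the on-policy sampling resolves the distribution mismatch. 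Thus the $\widetilde{O}(1/M)$ regret term carries directly into a bound on $D_R(\pi,\pi^*)$.

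It then remains to pass from the empirical losses $\widehat{\ell}_k$ actually optimized by the synthesizer to the population losses $\ell_k$. The sequence $\ell_k(\pi_k) - \widehat{\ell}_k(\pi_k)$ is a bounded martingale-difference sequence (each $\pi_k$ is measurable with respect to the history, and the roll-out distribution $\mu^{\pi_{k-1}}$ is adapted), so I would apply a Freedman/Bernstein-type concentration inequality with the loss range $\ell_{max}$. This yields the variance term $\sqrt{2\ell_{max}\log(1/\delta)/M}$ together with the lower-order range term; the factor $(1+H)$ in $\tfrac{2\ell_{max}(1+H)}{M}$ enters because the relevant per-trajectory quantity aggregates the surrogate loss over the $H+1$ states visited along a horizon-$H$ roll-out, inflating the effective range of each round's contribution by $(1+H)$. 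Summing the three contributions --- logarithmic online regret and the two concentration terms --- and taking a union bound over rounds delivers the stated high-probability bound.

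I expect the main obstacle to be the concentration step under \emph{adaptively chosen} sampling distributions: the roll-out distribution at round $k$ depends on all previously synthesized policies, so the per-round estimation errors are neither independent nor identically distributed, and a plain Hoeffding argument does not apply. Controlling them requires the martingale formulation together with a bound relating the variance of the loss estimate to $\ell_{max}$ and the horizon, which is what produces the coupled $(1+H)$ and $\sqrt{\ell_{max}}$ dependence. A secondary subtlety is making the identification of $\min_{\pi'}\tfrac1M\sum_k\ell_k(\pi')$ with $D_R(\pi^*,h)$ rigorous despite the shifting on-policy distributions; this is handled by the standard DAgger distribution-matching argument rather than by any new idea.
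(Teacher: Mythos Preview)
The paper does not supply a proof for this lemma; it states the result as ``adapted from \cite{ross2011reduction}'' and immediately uses it as a black box in the subsequent propositions. Your plan is precisely that adaptation: run the DAgger no-regret analysis with strongly convex surrogate losses to obtain the $\widetilde{O}(1/M)$ average-regret term, invoke the Bregman--Pythagorean inequality to convert excess imitation loss into $D_R(\pi,\pi^*)$, and apply a martingale (Freedman/Bernstein) concentration to handle the adaptively chosen roll-out distributions. This matches what the paper cites, so in that sense your approach and the paper's are the same.

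One small discrepancy worth flagging: your explanation for the $(1+H)$ factor --- that it comes from aggregating the surrogate loss over all $H+1$ states visited in a roll-out --- contradicts the paper's explicit setup, which assumes ``we only collect a single \emph{(state, expert action)} pair from each trajectory roll-out.'' In the standard DAgger finite-sample bound (Theorem~4.2 of \cite{ross2011reduction}) the lower-order horizon-dependent term has a different origin, tied to the range of the per-trajectory estimator and the initial expert-mixing rounds rather than to a sum over $H+1$ states. This does not break the overall structure of your argument, but you will need to source that term differently when you fill in the details.
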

The bound in lemma \ref{lem:app_dagger} is simpler than previous imitation learning results with cost information (\cite{ross2014reinforcement, ross2011reduction}. The reason is that the goal of the \project operator is more modest. Since we only care about the distance between the empirical projection $\pi$ and the true projection $\pi^*$, the loss objective in imitation learning is simplified (i.e., this is only a regret bound), and we can disregard how well policies in $\Pi$ can imitate the expert $h$, as well as the performance of $J(\pi)$ relative to the true cost from the environment $J(h)$. 

A consequence of this lemma is that for the number of trajectories at each round of imitation learning $M = O(\frac{\log 1/\delta}{\epsilon^2}) + O(\frac{H}{\epsilon})$, we have $D_R(\pi_t,\pi_t^*)\leq \epsilon$ with probability at least $1-\delta$. Applying union bound across $T$ rounds of learning, we obtain the following guarantee (under no gradient estimation error)
\begin{prop}[Finite-sample Projection Error Bound]
\label{prop:finite_sample_proj_error}
To simplify the presentation of the result, we consider $L_R, D, L, \alpha$ to be known constants. Using DAgger algorithm to collect $M = O(\frac{\log T/\delta}{\epsilon^2}) + O(\frac{H}{\epsilon})$ roll-outs at each iteration, we have the following regret guarantee after $T$ rounds of our main algorithm:
$$\frac{1}{T}\sum_{t=1}^T J(\pi_t) - J(\pi^*) \leq O\left(\frac{1}{\eta T} + \frac{\epsilon}{\eta} + \eta\right)$$
with probability at least $1-\delta$. Consequently, setting $\eta = \sqrt{\frac{1}{T} +\frac{H}{M} + \sqrt{\frac{\log(T/\delta)}{M}}}$, we have that 
$$\frac{1}{T}\sum_{t=1}^T J(\pi_t) - J(\pi^*) \leq O\left(\sqrt{\frac{1}{T} + \frac{H}{M} +\sqrt{\frac{\log(T/\delta)}{M}}}\right) $$ with probability at least $1-\delta$
\end{prop}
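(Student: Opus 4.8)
The plan is to reduce the statement to the already-established master regret bound of Theorem~\ref{thm:app_expected_regret_bound} and then control the (now random) per-round projection error using the DAgger guarantee of Lemma~\ref{lem:app_dagger}. First I would specialize Theorem~\ref{thm:app_expected_regret_bound} to the exact-gradient regime: since the policy gradient is computed exactly, both the variance $\sigma^2$ and the bias $\beta$ vanish, so the bound $\frac{L_R D^2}{\eta T} + \frac{\epsilon L_R D}{\eta} + \frac{\eta(\sigma^2 + L_J^2)}{\alpha} + \beta D$ collapses to $\frac{L_R D^2}{\eta T} + \frac{\epsilon L_R D}{\eta} + \frac{\eta L_J^2}{\alpha}$. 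Treating $L_R, D, L_J, \alpha$ as the known constants stipulated in the statement, this is exactly $O(\frac{1}{\eta T} + \frac{\epsilon}{\eta} + \eta)$, the first displayed inequality, valid on any event where the per-round projection error is at most $\epsilon$.

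Second, I would make that event high-probability. The projections $\pi_t$ are outputs of the DAgger-based \project operator, so their deviation from the true Bregman projection $\pi_t^*$ is random. I would apply Lemma~\ref{lem:app_dagger} at each round $t$ with failure budget $\delta/T$: it guarantees $D_R(\pi_t,\pi_t^*) \leq \epsilon$ once $M = O(\frac{\log(T/\delta)}{\epsilon^2}) + O(\frac{H}{\epsilon})$, which is precisely the sample size in the statement (the $\log(1/\delta)$ of the lemma becomes $\log(T/\delta)$ after allocating $\delta/T$ per round). A union bound over the $T$ rounds then yields, with probability at least $1-\delta$, that every per-round projection error is simultaneously controlled, so the deterministic bound from the first step applies verbatim.

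Third, I would substitute the explicit DAgger rate $\epsilon = O(\frac{H}{M} + \sqrt{\frac{\log(T/\delta)}{M}})$ and the prescribed $\eta = \sqrt{\frac{1}{T} + \frac{H}{M} + \sqrt{\log(T/\delta)/M}}$, then balance the three terms. Writing $A := \frac{1}{T} + \epsilon$ so that $\eta = \sqrt{A}$, the bounds $A \geq \frac{1}{T}$ and $A \geq \epsilon$ give $\frac{1}{\eta T} = \frac{1}{T\sqrt{A}} \leq \sqrt{A}$ and $\frac{\epsilon}{\eta} = \frac{\epsilon}{\sqrt{A}} \leq \sqrt{A}$, while the last term is $\eta = \sqrt{A}$; hence the whole expression is $O(\sqrt{A})$, which is the claimed final rate.

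The main obstacle I anticipate is the bookkeeping that links the two notions of projection error: Lemma~\ref{lem:app_dagger} controls the Bregman divergence $D_R(\pi_t,\pi_t^*)$, whereas the regret bound of Theorem~\ref{thm:app_expected_regret_bound} consumes the norm error $\norm{\pi_t-\pi_t^*}$ through the step $D_R(\pi,\pi_{t+1}) - D_R(\pi,\pi_{t+1}^*) \leq \epsilon L_R D$. I would bridge them with $\alpha$-strong convexity, $\frac{\alpha}{2}\norm{\pi_t-\pi_t^*}^2 \leq D_R(\pi_t,\pi_t^*)$, taking care that this conversion (and the choice of how to split the $\epsilon$ budget across the lemma's terms) does not alter the stated dependence of $M$ on $\epsilon$; the remaining steps are routine given Theorem~\ref{thm:app_expected_regret_bound} and the union bound.
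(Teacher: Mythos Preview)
Your proposal is correct and follows essentially the same route as the paper: specialize the master regret bound of Theorem~\ref{thm:app_expected_regret_bound} to the exact-gradient case, invoke Lemma~\ref{lem:app_dagger} with budget $\delta/T$ per round plus a union bound to control the projection error, and then substitute $\epsilon = O(H/M + \sqrt{\log(T/\delta)/M})$ with the prescribed $\eta$. You are in fact more careful than the paper, which glosses over the Bregman-versus-norm discrepancy you flag; your strong-convexity bridge is the right fix and does not change the $O$-dependence.
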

Note that the dependence on the time horizon of the task is sub-linear. This is different from standard imitation learning regret bounds, which are often at least linear in the task horizon. The main reason is that our comparison benchmark $\pi^*$ does live in the space $\Pi$, whereas for DAgger, the expert policy may not reside in the same space. 

\textbf{Noisy gradient estimate case.} We now turn to the issue of estimating the gradient of $\nabla J(\pi)$. We make the following simplifying assumption about the gradient estimation: 
\begin{itemize}
    \item The $\pi$ is parameterized by vector $\theta\in\mathbb{R}^k$ (such as a neural network). The parameterization is differentiable with respect to $\theta$ (Alternatively, we can view $\Pi$ as a differentiable subspace of $\F$, in which case we have $\HH = \F$)
    \item At each \update loop, the policy is rolled out $m$ times to collect the data, each trajectory has horizon length $H$
    \item For each visited state $s \sim d_h$, the policy takes a uniformly random action $a$. The action space is finite with cardinality $A$.
    \item The gradient $\nabla h_\theta$ is bounded by $B$ 
\end{itemize}
The gradient estimate is performed consistent with a generic policy gradient scheme, i.e.,
$$\widehat{\nabla} J(\theta) = \frac{A}{m}\sum_{i=1}^H\sum_{j=1}^m \nabla \pi_\theta(a_i^j | s_i^j,\theta) \widehat{Q}_i^j$$ where $\widehat{Q}_i^j$ is the estimated cost-to-go \cite{sutton2000policy}. 

Taking uniform random exploratory actions ensures that the samples are i.i.d. We can thus apply Bernstein's inequality to obtain the bound between estimated gradient and the true gradient. Indeed, with probability at least $1-\delta$, we have that the following bound on the bias component-wise:
$$\norm{\widehat{\nabla} J(\theta) - \nabla J(\theta)}_\infty \leq \beta \text{ when }m\geq \frac{(2\sigma^2 + 2AHB\frac{\beta}{3})\log\frac{k}{\delta}}{\beta^2}$$
which leads to similar bound with respect to $\norm{\cdot}_*$ (here we leverage the equivalence of norms in finite dimensional setting):
$$\norm{\nabla_t - \widehat{\nabla}_t}_* \leq \beta \text{ when } m = O\left(\frac{(\sigma^2+AHB\beta)\log\frac{k}{\delta}}{\beta^2}\right)$$
Applying union bound of this result over $T$ rounds of learning, and combining with the result from proposition (\ref{prop:finite_sample_proj_error}), we have the following finite-sample guarantee in the simplifying policy gradient update. This is also the more detailed statement of theorem \ref{thm:finite_sample_main_paper} in the main paper.
\begin{prop}[Finite-sample Guarantee under Noisy Gradient Updates and Projection Error] 
\label{prop:finite_sample_overall}
At each iteration, we perform policy gradient estimate using $m=O(\frac{(\sigma^2+AHB\beta)\log\frac{Tk}{\delta}}{\beta^2})$ trajectories and use DAgger algorithm to collect $M = O(\frac{\log T/\delta}{\epsilon^2}) + O(\frac{H}{\epsilon})$ roll-outs. Setting the learning rate $\eta = \sqrt{\frac{1}{\sigma^2}\big(\frac{1}{T} +\frac{H}{M} + \sqrt{\frac{\log(T/\delta)}{M}}\big)}$, after $T$ rounds of the algorithm, we have that
$$\frac{1}{T}\sum_{t=1}^T J(\pi_t) - J(\pi^*) \leq O\left(\sigma\sqrt{\frac{1}{T} + \frac{H}{M} +\sqrt{\frac{\log(T/\delta)}{M}}}\right) + \beta $$ with probability at least $1-\delta$.

Consequently, we also have the following regret bound:
$$\frac{1}{T}\sum_{t=1}^T J(\pi_t) - J(\pi^*) \leq O\left(\sigma\sqrt{\frac{1}{T} + \frac{H}{M} +\sqrt{\frac{\log(T/\delta)}{M}}}\right) + O\left(\sigma\sqrt{\frac{\log(Tk/\delta)}{m}}+\frac{AH\log(Tk/\delta)}{m}\right) $$
holds with probability at least $1-\delta$, where again $H$ is the task horizon, $A$ is the cardinality of action space, and $k$ is the dimension of function class $\Pi$'s parameterization.
\end{prop}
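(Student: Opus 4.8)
The plan is to read the claim as a finite-sample \emph{instantiation} of the generic regret bound in Theorem~\ref{thm:app_expected_regret_bound}, whose two free error parameters — the projection error $\epsilon$ on $\norm{\pi_t - \pi_t^\ast}$ and the gradient bias $\beta$ on $\norm{\widehat\nabla_t - \nabla_t}_\ast$ — I would control separately by concentration and then recombine under a union bound. Theorem~\ref{thm:app_expected_regret_bound} already supplies, for any fixed per-round $\epsilon$ and $\beta$, the decomposition $\frac{1}{T}\sum_t J(\pi_t) - J(\pi^\ast) \le \frac{L_R D^2}{\eta T} + \frac{\epsilon L_R D}{\eta} + \frac{\eta(\sigma^2 + L_J^2)}{\alpha} + \beta D$, so the whole task reduces to three subtasks: (i) show the DAgger projection delivers the advertised $\epsilon$ at the stated $M$, (ii) show the policy-gradient estimator delivers the advertised $\beta$ at the stated $m$, each with high probability across all $T$ rounds, and (iii) tune $\eta$.

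For subtask (i) I would invoke Lemma~\ref{lem:app_dagger} together with Proposition~\ref{prop:finite_sample_proj_error}: with $M = O(\log(T/\delta)/\epsilon^2) + O(H/\epsilon)$ roll-outs a single round drives $D_R(\pi_t,\pi_t^\ast) \le \epsilon$ with probability $1 - \delta/(2T)$, and a union bound over the $T$ rounds makes this simultaneous with probability $1 - \delta/2$. Because $R$ is $\alpha$-strongly convex, $\norm{\pi_t - \pi_t^\ast} \le \sqrt{2 D_R(\pi_t,\pi_t^\ast)/\alpha}$, so the Bregman bound transfers up to constants to the norm-based $\epsilon$ that Theorem~\ref{thm:app_expected_regret_bound} consumes, and inverting the sample-complexity relation gives $\epsilon = O(H/M + \sqrt{\log(T/\delta)/M})$. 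For subtask (ii) I would use that uniform random exploration over the finite action set renders the per-state gradient contributions i.i.d., so Bernstein's inequality applies coordinate-wise to $\widehat\nabla J(\theta) = \frac{A}{m}\sum_{i,j}\nabla\pi_\theta(a_i^j\mid s_i^j)\widehat Q_i^j$, yielding $\norm{\widehat\nabla_t - \nabla_t}_\infty \le \beta$ and hence $\norm{\widehat\nabla_t - \nabla_t}_\ast \le \beta$ by finite-dimensional norm equivalence once $m = O\big((\sigma^2 + AHB\beta)\log(k/\delta)/\beta^2\big)$; replacing $\delta$ by $\delta/(2Tk)$ and union-bounding over rounds and coordinates secures this with probability $1-\delta/2$.

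Intersecting the two events gives the overall $1-\delta$. Plugging $\epsilon$ and $\beta$ into the decomposition and setting $\eta = \sqrt{\frac{1}{\sigma^2}(\frac1T + \frac HM + \sqrt{\log(T/\delta)/M})}$ balances the first three terms to $O\big(\sigma\sqrt{\frac1T + \frac HM + \sqrt{\log(T/\delta)/M}}\big)$, leaving the additive $\beta$ and thus the first displayed inequality; solving the Bernstein relation for $\beta$, namely $\beta = O(\sigma\sqrt{\log(Tk/\delta)/m} + AH\log(Tk/\delta)/m)$, then produces the second, fully explicit bound. The bookkeeping of the $\delta$ budget and the learning-rate algebra are routine.

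The main obstacle I anticipate is reconciling the \emph{expected} guarantee of Theorem~\ref{thm:app_expected_regret_bound} with the \emph{high-probability} form asserted here. The projection and bias errors are genuine high-probability events, but the $\sigma^2$ term in the decomposition arose by taking an expectation of $\norm{\widehat\nabla_t}_\ast^2$ (via $\mathbb{E}\norm{\xi_t}_\ast^2 = \sigma_t^2$), so it is not yet a sample-path statement. I would close this gap by exploiting that the estimator is almost surely bounded — $\nabla h_\theta$ is bounded by $B$, the action set is finite of size $A$, and the cost-to-go is bounded in terms of $H$ — so the relative-improvement term $\frac{\eta_t^2}{\alpha}\norm{\widehat\nabla_t}_\ast^2$ admits a deterministic bound, and/or by applying a martingale concentration (Azuma or Freedman) to the noise sequence $\{\xi_t\}$ to control its cumulative effect. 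Checking that this concentration preserves the $\sqrt{1/T}$ scaling, and that the i.i.d.\ structure induced by uniform exploration is exactly what Bernstein requires, is the delicate part on which the rest of the argument rests.
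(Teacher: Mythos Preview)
Your proposal follows essentially the same route as the paper: instantiate the generic decomposition of Theorem~\ref{thm:app_expected_regret_bound} (equation~(\ref{eqn:app_theory_15})), control the projection error $\epsilon$ via the DAgger bound (Lemma~\ref{lem:app_dagger}), control the bias $\beta$ via Bernstein under uniform exploration, union-bound over the $T$ rounds, and plug in the stated learning rate. The paper's proof is in fact terser than yours---it simply writes down the inverted expressions $\epsilon = O(H/M + \sqrt{\log(1/\delta)/M})$ and $\beta = O(\sigma\sqrt{\log(k/\delta)/m} + AHB\log(k/\delta)/m)$ and says ``rearranging the inequalities leads to the desired bounds.''

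The obstacle you flag---that Theorem~\ref{thm:app_expected_regret_bound} is an \emph{expected} regret bound while the proposition asserts a \emph{high-probability} statement, and that the $\sigma^2$ contribution enters via $\mathbb{E}\norm{\widehat\nabla_t}_\ast^2$---is a genuine gap, but the paper's own proof does not address it either; it simply quotes equation~(\ref{eqn:app_theory_15}) and proceeds as if the bound were pathwise once $\epsilon$ and $\beta$ hold. So your proposed fix (deterministic bound on $\norm{\widehat\nabla_t}_\ast^2$ from the boundedness assumptions, or a martingale concentration on the noise) goes beyond what the paper supplies, and you should not expect to find that step worked out there.
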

\begin{proof}
(For both proposition (\ref{prop:finite_sample_overall}) and (\ref{prop:finite_sample_proj_error})).
The results follow by taking the inequality from equation (\ref{eqn:app_theory_15}), and by solving for $\epsilon$ and $\beta$ explicitly in terms of relevant quantities. Based on the specification of $M$ and $m$, we obtain the necessary precision for each round of learning in terms of number of trajectories:
\begin{align*}
    \beta &= O(\sigma\frac{\log(k/\delta)}{m}+ \frac{AHB\log(k/\delta)}{m}) \\
    \epsilon &= O(\frac{H}{M} + \sqrt{\frac{\log(1/\delta)}{M}})
\end{align*}
Setting the learning rate $\eta = \sqrt{\frac{1}{\sigma^2}\big(\frac{1}{T}+\epsilon \big)}$ and rearranging the inequalities lead to the desired bounds. 
\end{proof}
The regret bound depends on the variance $\sigma^2$ of the policy gradient estimates. It is well-known that vanilla policy gradient updates suffer from high variance. We instead use functional regularization technique, based on CORE-RL, in the practical implementation of our algorithm. The CORE-RL subroutine has been demonstrated to reduce the variance in policy gradient updates \cite{corerl}.
\subsection{Defining a consistent approximation of $\nabla_\HH J(\pi)$ - Proof of Proposition \ref{prop:continuity_gradient}}
\label{sec:app_define_gradient}
We are using the notion of Fr\'echet derivative to define gradient of differentiable functional. Note that while Gateaux derivative can also be utilized, Fr\'echet derivative ensures continuity of the gradient operator that would be useful for our analysis. 
\begin{defn}[Fr\'echet gradient]
A bounded linear operator $\nabla:\mathcal{H}\mapsto\mathcal{H}$ is called Fr\'echet functional gradient of $J$ at $h\in\HH$ if $\lim\limits_{\norm{g}\rightarrow 0} \frac{J(h+g)-J(h) - \inner{\nabla J(h),g}}{\norm{g}} = 0$
\end{defn}
We make the following assumption about $\HH$ and $\F$. One interpretation of this assumption is that the space of policies $\Pi$ and $\F$ that we consider have the property that a programmatic policy $\pi\in\Pi$ can be well-approximated by a large space of neural policies $f\in\F$. 
\begin{assumption}
$J$ is Fr\'echet differentiable on $\HH$. $J$ is also differentiable on the restricted subspace $\F$. And $\F$ is dense in $\HH$ (i.e., the closure $\widebar{\F} = \HH$)
\end{assumption}
It is then clear that $\forall$ $f\in\F$ the Fr\'echet gradient $\nabla_\F J(f)$, restricted to the subspace $\F$ is equal to the gradient of $f$ in the ambient space $\HH$ (since Fr\'echet gradient is unique). In general, given $\pi\in\Pi$ and $f\in\F$, $\pi+f$ is not necessarily in $\F$. However, the restricted gradient on subspace $\F$ of $J(\pi+f)$ can be defined asymptotically. 
\begin{prop}
Fixing a policy $\pi\in\Pi$, define a sequence of policies $f_k\in\F$, $k=1,2,\ldots$ that converges to $\pi$: $\lim_{k\rightarrow\infty}\norm{f_k-g} = 0$, we then have $\lim_{k\rightarrow\infty}\norm{\nabla_\F J(f_k) - \nabla_\HH J(\pi)}_* = 0$
\end{prop}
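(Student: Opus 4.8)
The plan is to prove the statement in two stages: first establish that on the subspace $\F$ the restricted gradient coincides \emph{exactly} with the ambient gradient at every point of $\F$, and then pass to the limit using continuity of the ambient gradient map. The first stage is the content of the sentence immediately preceding the proposition, which I would justify rigorously as follows.

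First I would fix $f\in\F$ and compare the two gradients at $f$. For any direction $v\in\F$, the directional derivative $\lim_{t\to 0}(J(f+tv)-J(f))/t$ is a single well-defined number, and by hypotheses (i) and (ii) it equals both $\langle \nabla_\HH J(f), v\rangle$ (differentiability on $\HH$) and $\langle \nabla_\F J(f), v\rangle$ (differentiability on $\F$). Hence $\nabla_\HH J(f) - \nabla_\F J(f)$ is orthogonal to every $v\in\F$. Here I would invoke density: since $\overline{\F}=\HH$ by (iii), continuity of the inner product forces this difference to be orthogonal to all of $\HH$ (approximate any $u\in\HH$ by a sequence in $\F$ and take the limit inside $\langle\cdot,\cdot\rangle$), hence orthogonal to itself, so $\nabla_\HH J(f)=\nabla_\F J(f)$. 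This equality holds for every $f\in\F$, and in particular $\nabla_\F J(f_k)=\nabla_\HH J(f_k)$ for each element of the chosen sequence.

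With this identification in hand, the claim reduces to showing $\|\nabla_\HH J(f_k) - \nabla_\HH J(\pi)\|_* \to 0$ whenever $\|f_k-\pi\|\to 0$, i.e.\ to continuity of the ambient gradient map $h\mapsto\nabla_\HH J(h)$ at $\pi$ (note $\pi\in\Pi\subset\HH$, so $\nabla_\HH J(\pi)$ exists by (i)). I would appeal to the fact that the Fréchet functional gradient, as defined in the paper, is a bounded and hence continuous operator on $\HH$, so that norm convergence $f_k\to\pi$ transfers to dual-norm convergence of the gradients; chaining this with the first stage yields $\nabla_\F J(f_k)\to\nabla_\HH J(\pi)$, as desired.

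The main obstacle is precisely this second stage. Pointwise Fréchet differentiability alone does not guarantee continuity of the gradient operator (differentiable-but-not-$C^1$ counterexamples exist already in finite dimensions), so the passage to the limit must lean on the continuity built into the paper's notion of a (bounded) Fréchet gradient, or equivalently on the Lipschitz smoothness of $J$ assumed elsewhere (cf.\ Proposition~\ref{prop:bias_variance_characterization}). I would therefore make this continuity hypothesis explicit, after which the limit is immediate. The first stage, by contrast, is robust and relies only on the density assumption together with uniqueness of the Riesz representative.
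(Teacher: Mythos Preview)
Your proposal is correct and follows essentially the same two-step argument as the paper's proof: first identify $\nabla_\F J(f)$ with $\nabla_\HH J(f)$ for $f\in\F$ (the paper invokes uniqueness of the Fr\'echet gradient, while you spell out the density argument), then pass to the limit via continuity of $h\mapsto\nabla_\HH J(h)$. If anything you are more careful than the paper, which simply asserts continuity from the phrase ``Fr\'echet derivative is a continuous linear operator'' without distinguishing boundedness of the derivative at a point from continuity of the gradient map---the very subtlety you flag and propose to resolve by making the smoothness hypothesis explicit.
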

\begin{proof}
Since Fr\'echet derivative is a continuous linear operator, we have $\lim_{k\rightarrow\infty}\norm{\nabla_\HH J(f_k) - \nabla_\HH J(\pi)}_* = 0$. By the reasoning above, for $f\in\F$, the gradient $\nabla_\F J(f)$ defined via restriction to the space $\F$ does not change compared to $\nabla_\HH J(f)$, the gradient defined over the ambient space $\HH$. Thus we also have $\lim_{k\rightarrow\infty}\norm{\nabla_\F J(f_k) - \nabla_\HH J(\pi)}_* = 0$. By the same argument, we also have that for any given $\pi\in\Pi$ and $f\in\F$, even if $\pi+f\not\in\F$, the gradient $\nabla_\F J(\pi+f)$ with respect to the $\F$ can be approximated similarly.
\end{proof}
Note that we are not assuming $J(\pi)$ to be differentiable when restricting to the policy subspace $\Pi$. 
\subsection{Theoretical motivation for Algorithm \ref{alg:update} - Proof of Proposition \ref{prop:relaxed_objective} and \ref{prop:bias_variance_characterization}}
\label{sec:app_theoretical_motivation}
We consider the case where $\Pi$ is not differentiable by parameterization. Note that this does not preclude $J(\pi)$ for $\pi\in\Pi$ to be differentiable in the non-parametric function space. Two complications arise compared to our previous approximate mirror descent procedure. First, for each $\pi\in\Pi$, estimating the gradient $\nabla J(\pi)$ (which may not exist under certain parameterization, per section \ref{sec:closing_gap}) can become much more difficult. Second, the update rule $\nabla R(\pi) - \nabla_\F J(\pi)$ may not be in the dual space of $\F$, as in the simple case where $\Pi\subset\F$, thus making direct gradient update in the $\F$ space inappropriate.
\begin{assumption}
$J$ is convex in $\HH$. 
\end{assumption}
By convexity of $J$ in $\HH$, sub-gradients $\partial J(h)$ exists for all $h\in\HH$. In particular, $\partial J(\pi)$ exists for all $\pi\in\Pi$. Note that $\partial J(\pi)$ reflects sub-gradient of $\pi$ with respect to the ambient policy space $\HH$.

We will make use of the following equivalent perspective to mirror descent\cite{beck2003mirror}, which consists of two-step process for each iteration $t$
\begin{enumerate}
    \item Solve for $h_{t+1} = \argmin_{h\in\HH} \eta\inner{\partial J(\pi_t),h} + D_R(h, \pi_t)$
    \item Solve for $\pi_{t+1} = \argmin_{\pi\in\Pi} D_R(\pi,h_{t+1})$
\end{enumerate}
We will show how this version of the algorithm motivates our main algorithm. Consider step 1 of the main loop of \ippg , where given a fixed $\pi\in\Pi$, the optimization problem within $\HH$ is 
\begin{equation}
    (\text{OBJECTIVE\_}1) = \min_{h\in\HH} \eta \inner{\partial J(\pi), h} + D_R(h,\pi) \label{eqn:app_theory_9}
\end{equation}
Due to convexity of $\HH$ and the objective, problem $(\text{OBJECTIVE\_}1)$ is equivalent to:
\begin{align}
    (\text{OBJECTIVE\_}1) = &\min \inner{\partial J(\pi), h} \\
    &\text{ s.t. } D_R(h,\pi) \leq \tau \label{eqn:app_theory_10}
\end{align}
where $\tau$ depends on $\eta$. Since $\pi$ is fixed, this optimization problem can be relaxed by choosing $\lambda\in[0,1]$, and a set of candidate policies $h = \pi + \lambda f$, for all $f\in\F$, such that $D_R(h,\pi) \leq \tau$ is satisfied (Selection of $\lambda$ is possible with bounded spaces). Since this constraint set is potentially a restricted set compared to the space of policies satisfying inequality (\ref{eqn:app_theory_10}), the optimization problem (\ref{eqn:app_theory_9}) is relaxed into:
\begin{equation}
    (\text{OBJECTIVE\_}1) \leq (\text{OBJECTIVE\_}2) = \min_{f\in \F} \inner{\partial J(\pi), \pi + \lambda f}
\end{equation}
Due to convexity property of $J$, we have
\begin{equation}
    \inner{\partial J(\pi), \lambda f } = \inner{\partial J(\pi), \pi+\lambda f- \pi)} \leq J(\pi+\lambda f) - J(\pi)
\end{equation}
The original problem $\text{OBJECTIVE\_}1$ is thus upper bounded by:
$$\min_{h\in\HH} \eta\inner{\partial J(\pi),h )}+D_R(h,\pi) \leq \min_{f\in\F} J\big(\pi + \lambda f \big) - J(\pi) + \inner{\partial J(\pi),\pi} $$
Thus, a relaxed version of original optimization problem $\text{OBJECTIVE\_}1$ can be obtained by miniziming $J(\pi+ \lambda f)$ over $f\in\F$ (note that $\pi$ is fixed). This naturally motivates using functional regularization technique, such as CORE-RL algorithm \cite{corerl}, to update the parameters of differentiable function $f$ via policy gradient descent update:
$$f^\prime = f - \eta\lambda \nabla_\F \lambda J(\pi + \lambda f)$$
where the gradient of $J$ is taken with respect to the parameters of $f$ (neural networks). This is exactly the update step in algorithm \ref{alg:update} (also similar to iterative updte of CORE-RL algorithm), where the neural network policy is regularized by a prior controller $\pi$.

\textbf{Statement and Proof of Proposition \ref{prop:bias_variance_characterization}}
\begin{prop}[Regret bound for the relaxed optimization objective]
\label{prop:app_bias_variance}
Assuming $J(h)$ is $L$-strongly smooth over $\HH$, i.e., $\nabla_\HH J(h)$ is $L$-Lipschitz continuous, approximating $\update_\HH$ by $\update_F$ per Alg. \ref{alg:update} leads to the expected regret bound: $\mathbb{E}\left[ \frac{1}{T}\sum_{t=1}^T J(\pi_t)\right] - J(\pi^*) = O\left( \lambda\sigma\sqrt{\frac{1}{T}+\epsilon} + \lambda^2 L^2\right)$
\end{prop}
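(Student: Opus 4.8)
The plan is to reduce the statement to the master regret bound of Theorem \ref{thm:app_expected_regret_bound}, by treating the $\update_\F$ step as an \emph{approximate} $\update_\HH$ step carrying a specific effective variance and an effective bias, both expressed through $\lambda$ and the smoothness constant $L$. Proposition \ref{prop:relaxed_objective} already shows that, for small $\lambda$, the ideal proximal step $\min_{h\in\HH}\eta\inner{\nabla_\HH J(\pi),h}+D_R(h,\pi)$ is upper-bounded by the relaxed objective $\min_{f\in\F} J(\pi+\lambda f)-J(\pi)+\inner{\nabla J(\pi),\pi}$, so it suffices to quantify how much is lost by optimizing $J(\pi+\lambda f)$ over $f\in\F$ instead of the exact linearized surrogate. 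I would organize the argument around two sources of slack: a variance rescaling and a linearization gap.

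First I would pin down the effective variance. The gradient step in Line 5 of Algorithm \ref{alg:update} is $f' = f - \eta\lambda\nabla_\F J(\pi+\lambda f)$ with returned policy $h=\pi+\lambda f_m$, so the net displacement of $h$ away from the fixed reference $\pi$ is scaled by $\lambda$ relative to a direct step on $f$. Carrying this factor through the stochastic gradient estimate rescales the noise entering the $\HH$-space update, so the effective variance becomes $\lambda^2\sigma^2$ in place of $\sigma^2$; equivalently, $\sigma$ is replaced by $\lambda\sigma$ everywhere it appears in Theorem \ref{thm:app_expected_regret_bound}. Choosing the learning rate to balance the resulting terms (of the form $\eta\propto\sqrt{\tfrac{1}{\lambda^2\sigma^2}\left(\tfrac{1}{T}+\epsilon\right)}$) collapses the diameter, projection-error, and variance contributions all to $O\!\left(\lambda\sigma\sqrt{\tfrac{1}{T}+\epsilon}\right)$, exactly mirroring the unbiased analysis but with $\sigma$ replaced by $\lambda\sigma$.

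Next I would bound the bias introduced by the relaxation using $L$-strong smoothness. Convexity of $J$ gives the linear lower bound $\inner{\nabla_\HH J(\pi),\lambda f}\leq J(\pi+\lambda f)-J(\pi)$ already used in Proposition \ref{prop:relaxed_objective}, while $L$-smoothness gives the matching quadratic upper bound $J(\pi+\lambda f)-J(\pi)\leq \inner{\nabla_\HH J(\pi),\lambda f}+\tfrac{L}{2}\norm{\lambda f}^2$. Sandwiching $J(\pi+\lambda f)$ between these shows that minimizing the relaxed objective differs from minimizing the exact linearized proximal surrogate by at most $\tfrac{L}{2}\norm{\lambda f}^2 = O(\lambda^2 L^2)$, where boundedness of $\norm{f}$ over the finitely many inner steps (starting from $f_0$) keeps the constant controlled. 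This is precisely the ``distance to the linear under-approximation'' described after the proposition, and it plays the role of the bias term $\beta$ consumed by Theorem \ref{thm:app_expected_regret_bound}; feeding $\beta = O(\lambda^2 L^2)$ into that bound contributes an additive, non-accumulating $O(\lambda^2 L^2)$.

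Combining the two effects and reading off the simplified regret from Theorem \ref{thm:app_expected_regret_bound} yields $\mathbb{E}\left[\tfrac{1}{T}\sum_{t=1}^T J(\pi_t)\right]-J(\pi^*)=O\!\left(\lambda\sigma\sqrt{\tfrac{1}{T}+\epsilon}+\lambda^2 L^2\right)$. The main obstacle I anticipate is the bias step: making rigorous the claim that the objective-level linearization gap $\tfrac{L}{2}\norm{\lambda f}^2$ translates cleanly into the bias quantity $\beta$ that Theorem \ref{thm:app_expected_regret_bound} expects, while simultaneously ensuring $\lambda$ is small enough that the relaxation of Proposition \ref{prop:relaxed_objective} remains valid (i.e., that the candidate policies $\pi+\lambda f$ stay inside the Bregman ball of radius $\tau$). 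Controlling $\norm{f}$ uniformly across the $T$ outer iterations, so that the bias constant does not secretly depend on $T$, is the remaining bookkeeping I would need to handle with care.
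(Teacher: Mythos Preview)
Your variance rescaling and the choice of $\eta$ are in the same spirit as the paper, but the bias step has a genuine gap, and it is exactly the one you flag. The sandwich $\inner{\nabla_\HH J(\pi),\lambda f}\le J(\pi+\lambda f)-J(\pi)\le \inner{\nabla_\HH J(\pi),\lambda f}+\tfrac{L}{2}\norm{\lambda f}^2$ bounds an \emph{objective-value} discrepancy, not the quantity $\beta=\norm{\mathbb{E}[\widehat{\nabla}_t\mid\pi_t]-\nabla J(\pi_t)}_*$ that Theorem~\ref{thm:app_expected_regret_bound} consumes. Without strong convexity there is no generic way to turn a small objective gap into a small iterate or gradient error, so feeding the sandwich bound into Theorem~\ref{thm:app_expected_regret_bound} as ``$\beta$'' is a type mismatch. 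Note also that $\tfrac{L}{2}\norm{\lambda f}^2$ is $O(\lambda^2 L)$, not $O(\lambda^2 L^2)$, so even the scaling you write down does not follow from the sandwich alone; and if you instead bound the gradient bias directly via smoothness, $\norm{\nabla J(\pi+\lambda f)-\nabla J(\pi)}\le L\lambda\norm{f}$, you get $\beta=O(\lambda L)$, which yields only $O(\lambda L)$ in the final bound rather than $O(\lambda^2 L^2)$.

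The paper avoids this by \emph{not} routing through Theorem~\ref{thm:app_expected_regret_bound}. It re-runs the mirror-descent decomposition with the error measured at the \emph{iterate} level: setting $R=\tfrac{1}{2}\norm{\cdot}^2$, one defines $\beta_t=\nabla R(h_{t+1})-\nabla R(h_{t+1}^*)=h_{t+1}-h_{t+1}^*$, where $h_{t+1}^*=\pi_t-\eta\nabla J(\pi_t)$ is the exact proximal point and $h_{t+1}=\pi_t+\lambda f_{t+1}$ with $f_{t+1}\in\argmin_f J(\pi_t+\lambda f)$ is the relaxed one. The key technical step is to bound $\norm{\beta_t}=\norm{\lambda f_{t+1}-f_{t+1}^*}$ (with $f_{t+1}^*=-\eta\nabla J(\pi_t)$): combining the convexity lower bound at $\pi_t+\lambda f_{t+1}$ with the smoothness upper bound at $\pi_t+\omega f_{t+1}^*$ and optimizing over the free parameter $\omega$ gives $\norm{\beta_t}=O\!\big(\eta(\lambda L)^2\big)$. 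The crucial feature is the factor of $\eta$: in the regret decomposition the $\beta_t$-terms enter as $\tfrac{1}{\eta}\inner{\beta_t,\cdot}$, so the $\eta$ cancels and the per-round contribution is $O((\lambda L)^2)$, delivering the additive $\lambda^2 L^2$ term. Your plan lacks both this iterate-level formulation and the $\omega$-optimization that produces the extra $\eta\lambda L$ factor; without them the claimed $\lambda^2 L^2$ scaling is not reachable.
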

\begin{proof}
Instead of focusing on the bias of the gradient estimate $\nabla_\HH J(\pi)$, we will shift our focus on the alternative proximal formulation of mirror descent, under optimization and projection errors. In particular, at each iteration $t$, let $h_{t+1}^* = \argmin_{h\in\HH} \eta\inner{\nabla J(\pi_t), h} + D_R(h,\pi_t)$ and let the optimization error be defined as $\beta_t$ where $\nabla R(h_{t+1}) = \nabla R(h_{t+1}^*) + \beta_t$. Note here that this is different from (but related to) the notion of bias from gradient estimate of $\nabla J(\pi)$ used in theorem \ref{thm:expected_regret_bound} and theorem \ref{thm:app_expected_regret_bound}. The projection error from imitation learning procedure is defined similarly to theorem \ref{thm:expected_regret_bound}: $\pi_{t+1}^* = \argmin_{\pi\in\Pi} D_R(\pi,h_{t+1})$ is the true projection, and $\norm{\pi_{t+1} - \pi_{t+1}^*} \leq \epsilon$.

We start with similar bounding steps to the proof of theorem \ref{thm:expected_regret_bound}:
\begin{align}
    \inner{\nabla J(\pi_t),\pi_t - \pi} &= \frac{1}{\eta}\inner{\nabla R(h_{t+1}^*) - \nabla R(\pi_t), \pi_t - \pi} \nonumber \\
    &= \frac{1}{\eta} \left( \inner{\nabla R(h_{t+1}) - \nabla R(\pi_t),\pi_t - \pi } - \inner{\beta_t, \pi_t - \pi}\right) \nonumber \\
    &= \underbrace{\frac{1}{\eta}\left( D_R(\pi,\pi_t) - D_R(\pi,h_{t+1}) + D_R(\pi_t,h_{t+1})\right)}_{\text{component\_1}} + \underbrace{\frac{1}{\eta} \inner{\beta_t, \pi_t-\pi}}_{\text{component\_2}} \label{eqn:app_bias_variance_1}
\end{align}
As seen from the proof of theorem \ref{thm:app_expected_regret_bound}, $\text{component\_1}$ can be upperbounded by:
$\frac{1}{\eta} \big( \underbrace{D_R(\pi,\pi_t) - D_R(\pi,\pi_{t+1})}_{\text{telescoping}} + \underbrace{D_R(\pi,\pi_{t+1}) - D_R(\pi,\pi_{t+1}^*)}_{\text{projection error}} \underbrace{- D_R(\pi_{t+1}^*,h_{t+1}) + D_R(\pi_t,h_{t+1})}_{\text{relative improvement}}\big)$
The bound on $\text{projection error}$ is identical to theorem \ref{thm:app_expected_regret_bound}:
\begin{equation}
D_R(\pi,\pi_t) - D_R(\pi,\pi_{t+1}^*) \leq \epsilon L_R D    \label{eqn:app_bias_variance_2}
\end{equation}
The bound on $\text{relative improvement}$ is slightly different:
\begin{align}
    &D_R(\pi_t, h_{t+1}) - D_R(\pi_{t+1}^*, h_{t+1}) = R(\pi_t) - R(\pi_{t+1}^*) +\inner{\nabla R(h_{t+1}), \pi_{t+1}^* - \pi_t} \nonumber\\
    &= R(\pi_t) - R(\pi_{t+1}^* + \inner{\nabla R(h_{t+1}^*), \pi_{t+1}^* - \pi_t}) + \inner{\beta_t, \pi_{t+1}^* - \pi_t} \nonumber\\
    &\leq \inner{\nabla R(\pi_t), \pi_t - \pi_{t+1}^*} - \frac{\alpha}{2} \norm{\pi_{t+1}^* - \pi_t}^2 +\inner{\nabla R(h_{t+1}^*),\pi_{t+1}^* - \pi_t} + \inner{\beta_t, \pi_{t+1}^* - \pi_t} \nonumber \\
    &= -\eta\inner{\nabla J_{\HH}(\pi_t), \pi_{t+1}^* - \pi_t} - \frac{\alpha}{2}\norm{\pi_{t+1}^* - \pi_t}^2 +\inner{\beta_t, \pi_{t+1}^* - \pi_t} \label{eqn:app_bias_variance_3} \\
    &\leq \frac{\eta^2}{2\alpha}\norm{\nabla_\HH J(\pi_t)}_*^2 + \inner{\beta_t, \pi_{t+1}^* - \pi_t} \nonumber \\
    &\leq \frac{\eta^2}{2\alpha} L_J^2 + \inner{\beta_t, \pi_{t+1}^* - \pi_t} \label{eqn:app_bias_variance_4}
\end{align}
Note here that the gradient $\nabla_\HH J(\pi_t)$ is not the result of estimation. Combining equations (\ref{eqn:app_bias_variance_1}), (\ref{eqn:app_bias_variance_2}), (\ref{eqn:app_bias_variance_3}), (\ref{eqn:app_bias_variance_4}), we have:
\begin{equation}
\label{eqn:app_bias_variance_6}
    \inner{\nabla J(\pi_t), \pi_t - \pi} \leq \frac{1}{\eta} \big( D_R(\pi,\pi_t) - D_R(\pi,\pi_{t+1}) +\epsilon L_R D + \frac{\eta^2}{2\alpha} L_J^2 +\inner{\beta_t, \pi_{t+1}^* - \pi} \big)
\end{equation}
Next, we want to bound $\beta_t$. Choose regularizer $R$ to be $\frac{1}{2}\norm{\cdot}^2$ (consistent with the pseudocode in algorithm \ref{alg:update}). We have that:
$$h_{t+1}^* = \argmin_{h\in\HH} \eta\inner{\nabla J(\pi_t),h} + \frac{1}{2}\norm{h-\pi_t}^2$$
which is equivalent to:
$$h_{t+1}^* = \pi_t +\argmin_{f\in\F} \eta\inner{ \nabla J(\pi_t), f} + \frac{1}{2}\norm{f}^2$$
Let $f_{t+1}^* = \argmin_{f\in\F} \eta\inner{ \nabla J(\pi_t), f} + \frac{1}{2}\norm{f}^2$. Taking the gradient over $f$, we can see that $f_{t+1}^* = -\eta\nabla J(\pi_t)$. Let $f_{t+1}$ be the minimizer of $\min_{f\in\F}J(\pi_t + \lambda f)$. We then have $h_{t+1}^* = \pi_t + f_{t+1}^*$ and $h_{t+1} = \pi + \lambda f_{t+1}$. Thus $\beta_t = h_{t+1} - h_{t+1}^* = f_{t+1} -f_{t+1}^*$.

On one hand, we have $$J(\pi_t+\lambda f_{t+1}) \leq J(\pi_t + \omega f_{t+1}^*) \leq J(\pi_t) +\inner{\nabla J(\pi_t), \omega f_{t+1}^*} + \frac{L}{2}\norm{\omega f_{t+1}^*}^2$$
due to optimality of $f_{t+1}$ and strong smoothness property of $J$. On the other hand, since $J$ is convex, we also have the first-order condition:
$$J(\pi_t + \lambda f_{t+1}) \geq J(\pi_t) + \inner{\nabla J(\pi_t), \lambda f_{t+1}}$$
Combine with the inequality above, and subtract $J(\pi_t)$ from both sides, and using the relationship $f_{t+1}^* = -\eta\nabla J(\pi_t)$, we have that:
$$\inner{-\frac{1}{\eta} f_{t+1}^*,\lambda f_{t+1}} \leq \inner{-\frac{1}{\eta}f_{t+1}^*, \omega f_{t+1}^*} +\frac{L\omega^2}{2}\norm{f_{t+1}^*}^2$$
Since this is true $\forall \omega$, rearrange and choose $\omega$ such that $\frac{\omega}{\eta} - \frac{L \omega^2}{2} = -\frac{\lambda}{2\eta}$, namely $\omega = \frac{1-\sqrt{1-\lambda\eta L}}{L\eta}$, and complete the square, we can establish the bound that:
\begin{equation}
    \label{eqn:app_bias_variance_5}
    \norm{f_{t+1} - f_{t+1}^*} \leq \eta(\lambda L)^2 B
\end{equation}
for $B$ the upperbound on $\norm{f_{t+1}}$. We thus have $\norm{\beta_t} = O(\eta(\lambda L)^2)$. Plugging the result from equation \ref{eqn:app_bias_variance_5} into RHS of equation \ref{eqn:app_bias_variance_6}, we have:
\begin{equation}
\label{eqn:app_bias_variance_7}
\inner{\nabla J(\pi_t), \pi_t - \pi} \leq \frac{1}{\eta}\big( D_R(\pi,\pi_t) - D_R(\pi, \pi_{t+1}) + \epsilon L_R D + \frac{\eta^2}{2\alpha}L_J^2 \big) +\big( \eta (\lambda L)^2 B\big)   
\end{equation}
Since $J$ is convex in $\HH$, we have $J(\pi_t) - J(\pi) \leq \inner{\nabla J(\pi_t), \pi_t - \pi}$. Similar to theorem \ref{thm:expected_regret_bound}, setting $\eta = \sqrt{\frac{1}{\lambda^2\sigma^2}(\frac{1}{T}+\epsilon)}$ and taking expectation on both sides, we have:
\begin{equation}
    \mathbb{E}\left[ \frac{1}{T}\sum_{t=1}^T J(\pi_t)\right] - J(\pi^*) = O\big( \lambda\sigma\sqrt{\frac{1}{T}+\epsilon} +\lambda^2 L^2 \big)
\end{equation}
Note that unlike regret bound from theorem \ref{thm:expected_regret_bound} under general bias, variance of gradient estimate and projection error, $\sigma^2$ here explicitly refers to the bound on neural-network based policy gradient variance. The variance reduction of $\lambda\sigma$, at the expense of some bias, was also similarly noted in a recent functional regularization technique for policy gradient \cite{corerl}.
\end{proof}
\section{Additional Experimental Results and Details}\label{sec:appendix-experiments}
\subsection{TORCS}
We generate controllers for cars in \emph{The Open Racing Car Simulator} (\torcs) \cite{TORCS}. In its full generality \torcs provides a rich environment with input from up to $89$ sensors, and optionally the 3D graphic from a chosen camera angle in the race. The controllers have to decide the values of $5$ parameters during game play, which correspond to the acceleration, brake, clutch, gear and steering of the car.

Apart from the immediate challenge of driving the car on the track, controllers also have to make race-level strategy decisions, like making pit-stops for fuel. A lower level of complexity is provided in the Practice Mode setting of TORCS. In this mode all race-level strategies are removed. Currently, so far as we know, state-of-the-art \drl models are capable of racing only in Practice Mode, and this is also the environment that we use. Here we consider the input from $29$ sensors, and decide values for the acceleration, steering, and braking actions.

We chose a suite of tracks that provide varying levels of difficulty for the learning algorithms. In particular, for the tracks Ruudskogen and Alpine-2, the \ddpg agent is unable to reliably learn a policy that would complete a lap. We perform the experiments with twenty-five random seeds and report the median lap time over these twenty-five trials. However we note that the \torcs simulator is not deterministic even for a fixed random seed. Since we model the environment as a Markov Decision Process, this non-determinism is consistent with our problem statement.

For our Deep Reinforcement Learning agents we used standard open source implementations (with pre-tuned hyper-parameters) for the relevant domain.

All experiments were conducted on standard workstation with a 2.5 GHz Intel Core i7 CPU and a GTX 1080 Ti GPU card.

The code for the \torcs experiments can be found at: \href{https://bitbucket.org/averma8053/propel}{https://bitbucket.org/averma8053/propel}

In Table~\ref{table:initddpg} we show the lap time performance and crash ratios of \ippg agents initialized with neural policies obtained via \ddpg. As discussed in Section~\ref{sec:evaluation}, \ddpg often exhibits high variance across trials and this adversely affects the performance of the \ippg agents when they are initialized via \ddpg. In Table~\ref{table:treegen} we show generalization results for the \tree agent. As noted in Section~\ref{sec:evaluation}, the generalization results for \tree are in between those of \ddpg and \prog.

\textbf{Verified Smoothness Property.} For the program given in Figure~2 we proved using symbolic verification techniques, that $  \forall k, \ \sum_{i=k}^{k+5}\norm{\pickc(s[\rpm], i+1) - \pickc(s[\rpm], i)} < 0.003  \implies  \norm{\pickc(a[\mathtt{Accel}], k+1) - \pickc(a[\mathtt{Accel}], k)} < 0.63 $. Here the function $\pickc(., i)$ takes in a history/sequence of sensor or action values and returns the value at position $i$, . Intuitively, the above logical implication means that if the sum of the consecutive differences of the last six $\rpm$ sensor values is less than $0.003$, then the acceleration actions calculated at the last and penultimate step will not differ by more than $0.63$.

 \begin{table}[h]
	\caption{\textit{Performance results in \torcs of \ippg agents initialized with neural policies obtained via \ddpg, over 25 random seeds. Each entry is formatted as Lap-time / Crash-ratio, reporting median lap time in seconds over all the seeds (lower is better) and ratio of seeds that result in crashes (lower is better). A lap time of \textsc{Cr} indicates the agent crashed and could not complete a lap for more than half the seeds.}}
	\label{table:initddpg}
	\begin{center}
		\begin{small}
			\begin{sc}
				\begin{tabular}{l c c c c c}
					\toprule
					  & G-Track  & E-Road & Aalborg & Ruudskogen & Alpine-2  \\
					  Length & 3186m & 3260m & 2588m & 3274m & 3774m \\
					\midrule
					\prog-Ddpg & 97.76/.12 & 108.06/.08 & 140.48/.48 & Cr / 0.68 & Cr / 0.92 \\
					\tree-Ddpg   & 78.47/0.16 & 85.46/.04 & Cr / 0.56 & Cr / 0.68 & Cr / 0.92 \\
					\bottomrule
				\end{tabular}
			\end{sc}
		\end{small}
	\end{center}
%	\vspace{-0.05in}
\end{table}

\begin{table}
\caption{\textit{Generalization results in \torcs for \tree, where rows are training and columns are testing tracks. Each entry is formatted as \prog~/ DDPG, and the number reported is the median lap time in seconds over all the seeds (lower is better).  \textsc{Cr} indicates the agent crashed and could not complete a lap for more than half the seeds.}}
\label{table:treegen}
	\begin{center}
	\begin{sc}
		\begin{footnotesize}
				\begin{tabular}{lccccc}
					\toprule
					 & G-Track  & E-Road & Aalborg & Ruudskogen & Alpine-2  \\
					\midrule					
					G-Track &  -  & 95 &  Cr &  Cr & Cr    \\
					E-Road  &  84 & -  & Cr & Cr & Cr \\
					Aalborg  &  111 & Cr  & -  &  Cr &  Cr\\
					Ruudskogen  & 154 & Cr & Cr & -  & Cr \\
					Alpine-2  &  Cr & 276 &  Cr &   Cr & -  \\
					\bottomrule
				\end{tabular}
		\end{footnotesize}
		\end{sc}
	\end{center}
\end{table}
\begin{table}
	\caption{Performance results in Classic Control problems. Higher scores are better.}
	\label{table:classic}
	\begin{center}
		\begin{small}
			\begin{sc}
				\begin{tabular}{l c c}
					\toprule
					  & MountainCar & Pendulum  \\
					\midrule					
					Prior  & $00.59 \pm 0.00 $&$ -875.53 \pm 0.00$\\
					Ddpg & $97.16 \pm 3.21 $&$ -132.70 \pm 6.44$ \\
					Trpo & $93.03 \pm 1.86$ & $-131.54 \pm 4.56$ \\
					Ndps &$ 66.98 \pm 3.11$ &$ -435.71 \pm 4.83$ \\
					Viper &$ 64.86 \pm 3.28$ &$ -394.11 \pm 4.97$ \\
					\prog & $95.63 \pm 1.02 $& $-187.71  \pm 2.35$ \\
					\tree & $96.56 \pm 2.81 $ & $-139.09  \pm 3.31$ \\
					\bottomrule
				\end{tabular}
			\end{sc}
		\end{small}
	\end{center}
\end{table}

\subsection{Classic Control}
We present results from two classic control problems, Mountain-Car (with continuous actions) and Pendulum, in Table~\ref{table:classic}. We use the OpenAI Gym implementations of these environments. More information about these environments can be found at the links: \href{https://gym.openai.com/envs/MountainCarContinuous-v0/}{MountainCar} and \href{https://gym.openai.com/envs/Pendulum-v0/}{Pendulum}.

In Mountain-Car the goal is to drive an under-powered car up the side of a mountain in as few time-steps as possible. In Pendulum, the goal is to swing a pendulum up so that it stays upright. In both the environments an episode terminates after a maximum of $200$ time-steps. 

In Table~\ref{table:classic} we report the mean and standard deviation, over twenty-five random seeds, of the average scores over $100$ episodes for the listed agents and environments. In Figure~\ref{fig:mountain} and Figure~\ref{fig:pendulum} we show the improvements made over the prior by the \prog agent in MountainCar and Pendulum respectively, with each iteration of the \ippg algorithm.

\begin{figure}[h]
% \vspace*{-15cm}
\centering
    \begin{minipage}{0.45\textwidth}
    \centering
    \includegraphics[width=\columnwidth]{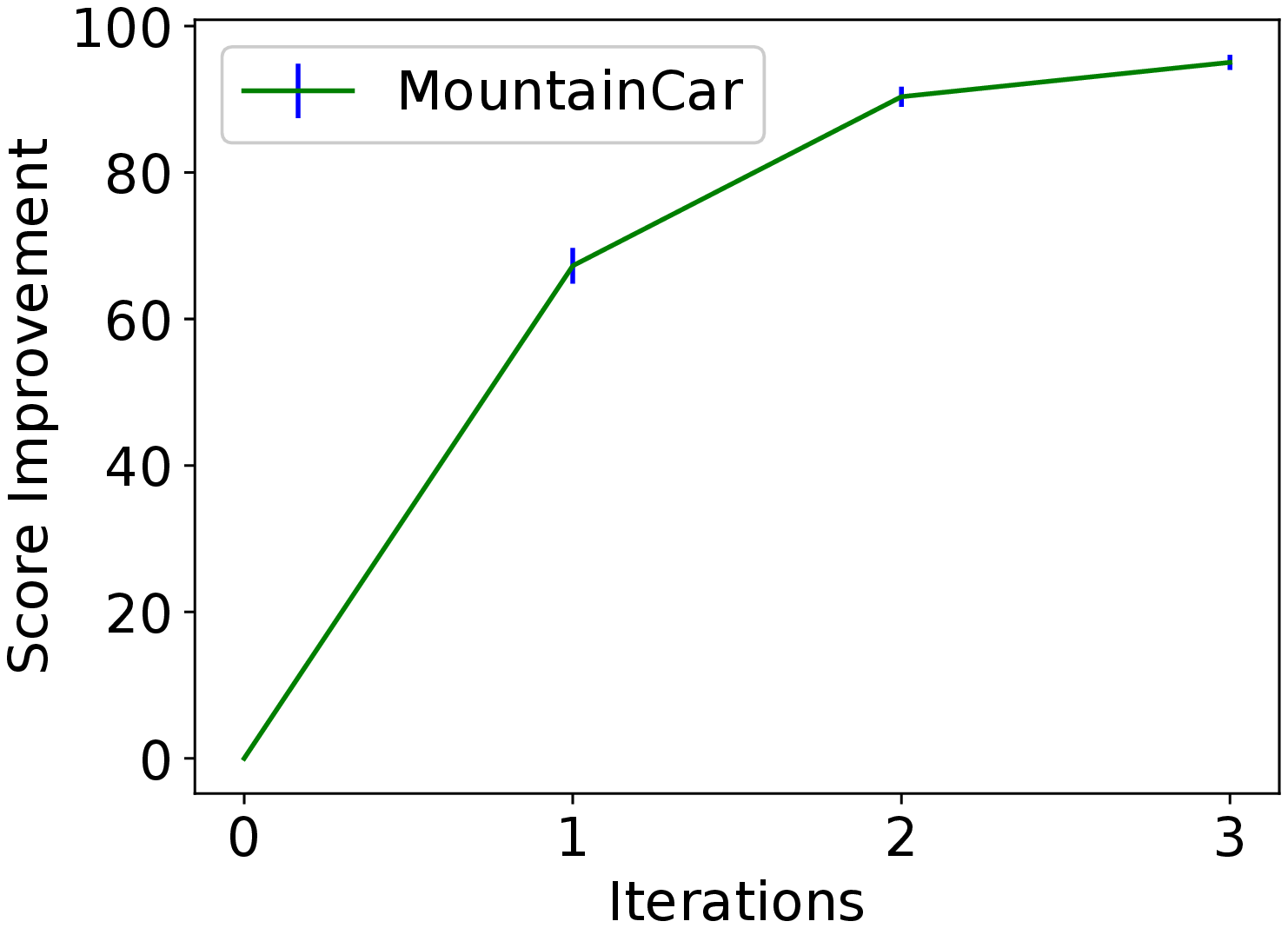}
    \caption{Score improvements in the MountainCar environment over iterations of \prog.}
  \label{fig:mountain}
 \end{minipage}\hfill
    \begin{minipage}{0.45\textwidth}
    \centering
    \includegraphics[width=\columnwidth]{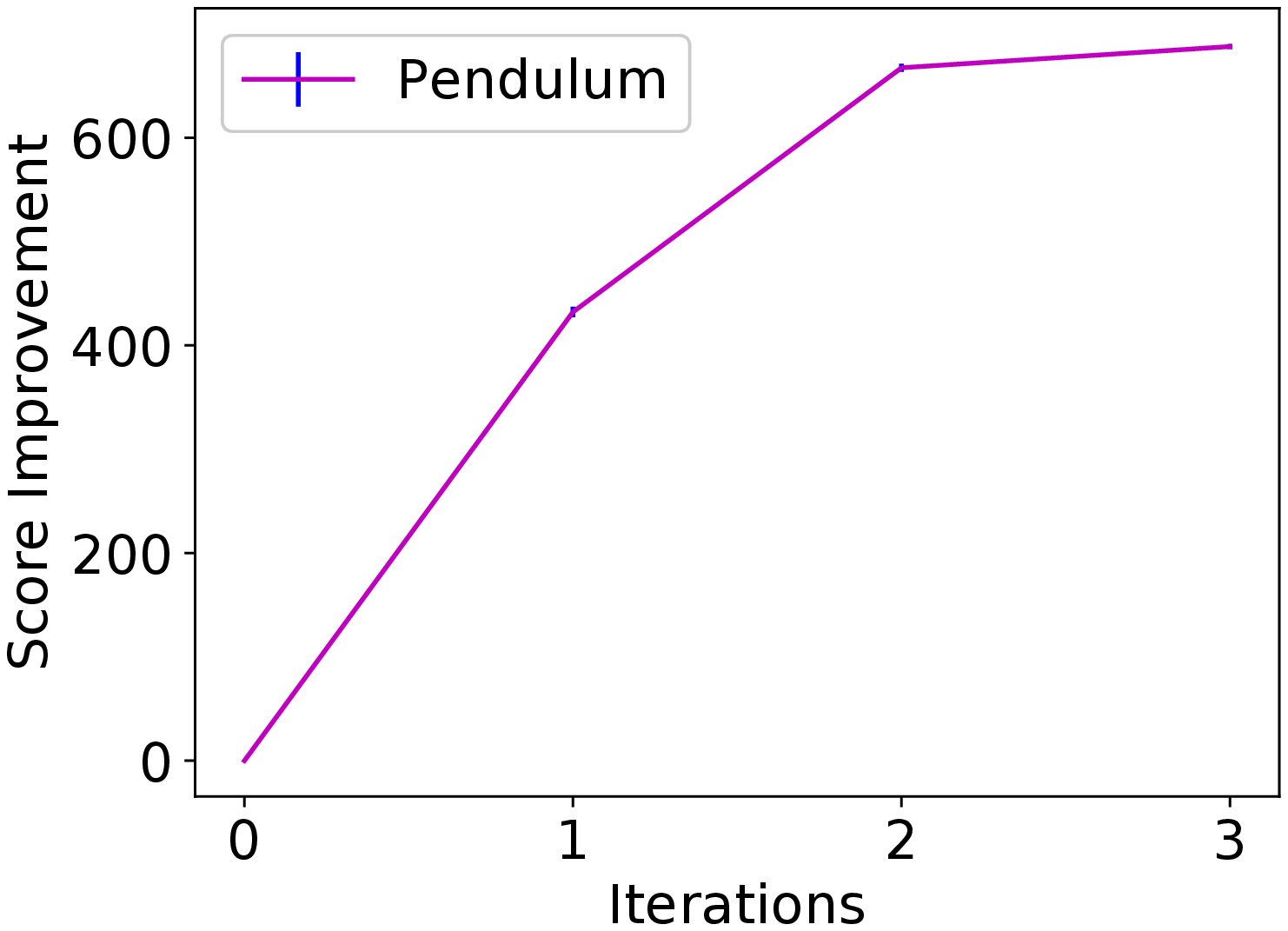}
    \caption{Score improvements in the Pendulum environment over iterations of \prog.}
  \label{fig:pendulum}
  \end{minipage}
\end{figure}

\end{document}